\numberwithin{equation}{section}
\newtheorem{thm}{Theorem}
\newtheorem{lem}{Lemma}
\newtheorem{prop}{Proposition}
\newtheorem{remark}{Remark}
\newtheorem{definition}{Definition}
\newcommand{\thmref}[1]{Theorem~\ref{#1}}
\newcommand{\lemref}[1]{Lemma~\ref{#1}}
\newcommand{\propref}[1]{Proposition~\ref{#1}}
\def\sph{\mathbb{S}^{d-1}}
\def\RR{\mathbb{R}}
\def\a{\alpha}
\def\s{\sigma}
\def\NN{\mathbb{N}}
\def\ZZ{\mathbb{Z}}
\def\f{\frac}
\def\d{\delta}
\def\T{\mathbf \Theta}
\def\OO{\mathcal{O}}
\def\l{\lambda}
\def\la{\langle}
\def\ra{\rangle}
\def\CH{\mathcal H}
\def\proj{\operatorname{Proj}}
\def\Bl{\Bigl} \def\Br{\Bigr}
\begin{document}
\title{Theory of Deep Convolutional Neural Networks II: Spherical Analysis}
\author{Zhiying Fang$^{a}$, Han Feng$^{b}$, Shuo Huang$^{b}$, Ding-Xuan Zhou$^{a, b}$ \\
 School of Data Science$^{a}$,  Department of Mathematics$^{b}$ \\
City University of Hong Kong, Kowloon, Hong Kong \\
Email: zyfang4-c@my.cityu.edu.hk, hanfeng@cityu.edu.hk \\
shuang56-c@my.cityu.edu.hk, mazhou@cityu.edu.hk}
\date{}

\maketitle
\begin{abstract}
Deep learning based on deep neural networks of various structures and architectures has been powerful in many practical applications,
but it lacks enough theoretical verifications. In this paper, we consider a family of deep convolutional neural networks
applied to approximate functions on the unit sphere $\mathbb{S}^{d-1}$ of $\mathbb{R}^d$. Our analysis presents rates of uniform approximation when the approximated function
lies in the Sobolev space $W^r_\infty (\sph)$ with $r>0$ or takes an additive ridge form. Our work verifies theoretically the modelling and approximation ability
of deep convolutional neural networks followed by downsampling and one fully connected layer or two.
The key idea of our spherical analysis is to use the inner product form of the reproducing kernels of the spaces of spherical harmonics
and then to apply convolutional factorizations of filters to realize the generated linear features.
\end{abstract}

\noindent {\it Keywords}: deep learning, convolutional neural networks, approximation theory, spherical analysis, Sobolev spaces

\section{Introduction}

{\bf Deep learning} has attracted tremendous attention from various fields of science and technology recently.
Wide applications including those in image processing \cite{he2016deep} and speech recognition \cite{hinton2012deep} have
received great successes. Based on deep neural network structures, it has a strong capability of obtaining data
features and distinguishes itself from classical machine learning methods. Though it is successful in practical applications, theoretical assurances
are still lacking and need to be investigated. Many attempts have been made trying to understand the practical power of
deep neural networks \cite{bruna2013invariant,mallat2016understanding}.

The classical (shallow) neural networks to approximate functions or process data
on $\RR^d$ take the form
\begin{equation}\label{ShallowNetworks}
f_{N}(x)=\sum_{k=1}^{N} c_{k} \sigma\left(\langle w_k,x\rangle-b_{k}\right), \qquad x\in\RR^d,
\end{equation}
where $N$ is the number of neurons called width, $\{w_k \in \RR^d, c_k\in\RR, b_k\in\RR\}_{k=1}^N$ are parameters corresponding to connection weights, biases, and coefficients,
$\langle w_k, x\rangle =w_k \cdot x$ is the inner product in $\RR^d$, and $\s:\RR\to \RR$ is an activation function. Approximation of functions on subsets
of $\RR^d$ by shallow neural networks (\ref{ShallowNetworks}) was studied well around the late 1980s. See \cite{Mhaskar1993, Chui1996} and references therein.
As a natural extension of shallow nets, fully connected deep neural networks (DNNs) have been developed since the 1990s.
A fully connected DNN of input $x=(x_1,x_2,\ldots, x_d)\in \RR^d$ with $J$ hidden layers of neurons $\{H^{(j)}: \RR^d\to \RR^{d_j}\}$ with width $\{d_j\}$ is defined iteratively by $H^{(0)}(x)=x$ with $d_0 =d$ and
$$ \left(H^{(j)}(x)\right)_i=\s(\langle w_i^{(j)}, H^{(j-1)}(x)\rangle-b_i^{(j)}), \quad i=1,2,\ldots, d_j, $$
where  $w_i^{(j)}\in \RR^{d_{j-1}}$ and $b^{(j)}_i\in\RR$ are connection weights and biases in the $j$-th layer.
If we use $w_i^{(j)}\in \RR^{d_{j-1}}$ with $i=1, \ldots, d_j$ as rows to form a $d_{j}\times d_{j-1}$ matrix $F^{(j)}$
and $b_i^{(j)}$ to form a vector $b^{(j)} =(b_i^{(j)})_{i=1}^{d_j}$, then by acting $\s$ componentwise on vectors, the DNN of depth $J$ can be expressed as
\begin{equation}\label{dcnn-1}
H^{(j)}(x)=\s \left(F^{(j)} \ H^{(j-1)}(x)- b^{(j)}\right), \quad j=1,2, \ldots, J.
\end{equation}

DNNs designed by convolutions are called {\bf deep convolutional neural networks} (CNNs) and have been very successful
in image classification and related applications \cite{Krizhevsky2012}. Such a network is associated with
a sequence of convolutional filters $\mathbf w=\{w^{(j)}: \ZZ \to \RR\}_{j=1}^J$, where $w^{(j)}$ is supported in $\left\{0, \cdots, S^{(j)}\right\}$ for some $S^{(j)} \in \mathbb{N}$ called filter length. The convolution of such a filter $w$ supported in $\left\{0,\cdots, S\right\}$ with another sequence $v=(v_1, \ldots, v_D)$ is a sequence $w*v$ given by
$$(w*v)_i=\sum_{k\in\ZZ} w_{i-k} v_k =\sum_{k=1}^D w_{i-k}v_k, \qquad i\in\ZZ,$$
which is supported in $\left\{1,\cdots, D+S\right\}$. Then by restricting the convoluted sequence onto its support, for input $x=(x_1,x_2,\ldots, x_d)\in \RR^d$, a deep CNNs with $J$ hidden layers of neurons $\{h^{(j)}: \RR^d\to \RR^{d_j}\}$ and widths $\left\{d_j = d_{j-1} + S^{(j)}\right\}$ is defined iteratively by $h^{(0)}(x)=x$ and
\begin{equation}\label{dcnn-2}
  h^{(j)}(x)=\s\left(\left(\sum_{k=1}^{d_{j-1}} w^{(j)}_{i-k} \left(h^{(j-1)}(x)\right)_k\right)_{i=1}^{d_j} -b^{(j)}\right).
\end{equation}
By inducing a Toeplitz type {\bf convolutional matrix}
$$T^{w}:=\left(w_{i-k}\right)_{i=1, \dots, D+S, k=1, \dots, D}$$
associated with a filter $w$ of filter length $S$ and $D\in\NN$ given explicitly by
\begin{equation}\label{Toeplitz}
T^{w}=\left[\begin{array}{lllllll}
{w_{0}} & {0} & {0} & {0} & {\cdots} & {\cdots} & {0} \\
{w_{1}} & {w_{0}} & {0} & {0} & {\cdots} & {\cdots} & {0} \\
{\vdots} & {\ddots} & {\ddots} & {\ddots} & {\ddots} & {\ddots} & {\vdots} \\
{w_{S}} & {w_{S-1}} & {\cdots} & {w_{0}} & {0} & {\cdots} & {0} \\
{0} & {w_{S}} & {\cdots} & {w_{1}} & {w_{0}} & {0 \cdots} & {0} \\
{\vdots} & {\ddots} & {\ddots} & {\ddots} & {\ddots} & {\ddots} & {\vdots} \\
{0} & {\cdots} & {0} & {w_{S}} & {\cdots} & {w_{1}} & {w_{0}} \\
{0} & {\cdots} & {0} & {0} & {w_{S}} & {\cdots} & {w_{1}} \\
{\vdots} & {\ddots} & {\ddots} & {\ddots} & {\ddots} & {\ddots} & {\vdots} \\
{0} & {0} & {0} & {0} & {\cdots} & {0} & {w_{S}}
\end{array}\right]\in \mathbb{R}^{(D+S) \times D},
\end{equation}
a deep CNN can be regarded as a special sparse form of a fully connected DNNs with the full connection matrices $F^{(j)}$
in \eqref{dcnn-1} replaced by the sparse Toeplitz convolutional matrices $T^{(j)}:=T^{w^{(j)}}$ with $D=d_{j-1}$ and $S=S^{(j)}$ for $j=0,1,\ldots, J$.
That is, \eqref{dcnn-2} becomes
\begin{equation}\label{InitialLayers}
  h^{(j)}(x)=\s \left(T^{(j)}\  h^{(j-1)}(x)-b^{(j)}\right), \qquad j=1,\ldots, J.
\end{equation}
Compared with fully-connected DNNs, deep CNNs reduce the computational complexity by using at each layer a Toeplitz matrix due to the sparsity and convolutional nature.
Throughout the paper we take an identical filter length $S^{(j)} \equiv S\in\NN$ implying $\{d_j = d+ j S\}$ and take the rectified linear unit (ReLU) activation function
$$\s(u)=\max\{u, 0\}, \qquad u\in\RR. $$
It was shown in \cite{zhou2020theory} that the output layer of any fully-connected DNN can be realized
by a downsampled deep CNN with free parameters of the same order, and that deep CNNs can approximate
ridge functions of the form $g(\xi\cdot x)$ with univariate functions $g$ and unknown $\xi\in \RR^d$ to the same accuracy with much smaller number of free parameters.
Universality of approximation by deep CNNs was also established in \cite{zhou2020universality} where rates of approximation were provided for restrictions of functions
from the Sobolev space $H^r(\RR^d)$ with a regularity index $r>2+d/2$. The regularity index $r$ is large when $d$ increases and is essentially
needed in the analysis there due to the function regularity on the whole Euclidean space $\RR^d$.
Note that the issue of approximating non-smooth functions by fully connected DNNs has been studied in \cite{Suzuki2019, Imaizumi2019}.
Moreover, the Sobolev space $H^r(\RR^d)$ requires derivatives of various orders to belong to the $L_2$ space,
while the approximation considered in \cite{zhou2020universality} is measured in the $L_\infty$ norm.

In this paper, we overcome the difficulty in the large regularity index and the inconsistency of $L_2$ and $L_\infty$ norms
for the setting with data from the unit sphere $\sph$ in $\RR^d$.
With our novel analysis conducted with spherical harmonic expansions, we can present rates of approximating functions
from the Sobolev space $W^r_\infty(\sph)$ (to be defined below) on $\sph$, with any positive index $r$, by downsampled deep CNNs defined in \cite{zhou2020theory} followed by two fully connected layers.

\begin{definition}
The {\bf downsampling} operator $\mathfrak{D}_d: \mathbb{R}^{D} \rightarrow \mathbb{R}^{\lfloor D/d\rfloor}$ with a scaling parameter $d \leq D$ is defined by
\begin{equation}\label{downsample}
\mathfrak{D}_d (v) = \left(v_{id}\right)_{i=1}^{\lfloor D/d \rfloor}, \qquad v=(v_i)_{i=1}^D \in \mathbb{R}^D,
\end{equation}
where $\lfloor u\rfloor$ denotes the integer part of $u> 0$.
\end{definition}

After the last CNN layer, we add two fully connected layers $h^{(J+1)}, h^{(J+2)}$  with widths $\mathcal D_1, \mathcal D_2>0$, respectively, connection matrices $F^{(J+1)}, F^{(J+2)}$ and  bias vectors $b^{(J+1)}, b^{(J+2)}$, to be determined. Precisely,
\begin{equation}\label{LastLayer}
 h^{(J+1)}(x)=\s\left(F^{(J+1)} \mathfrak{D}_d \left(h^{(J)}(x)\right)-b^{(J+1)}\right)
 \end{equation}
and
 \begin{equation}
 h^{(J+2)}(x)=\s\left(F^{(J+2)} h^{(J+1)}(x)-b^{(J+2)}\right).
\end{equation}
Such a network with many convolutional layers followed by downsampling operations and very few fully connected layers
is quite common in practical applications \cite{Krizhevsky2012, he2016deep}. The hypothesis space of functions on $\mathbb{S}^{d-1}$ induced by our network is given by
\begin{equation}\label{hypothesisspace12}
\mathfrak{H}_{J,\mathcal{D}_1,\mathcal D_2,S}=
\left\{c^{(J+2)} \cdot h^{(J+2)}(x)-A: \ c^{(J+2)} \in \mathbb{R}^{{\mathcal D}_2}, A\in\RR\right\}.
\end{equation}

\section{Main Results on Rates of Approximation}\label{mainresult}

Our target is to establish rates of approximating functions in $W^r_\infty(\sph)$ by those from the hypothesis space
$\mathfrak{H}_{J,\mathcal{D}_1,\mathcal D_2,S}$ defined by (\ref{hypothesisspace12}).
Since the sums of the rows in the middle of the Toeplitz type matrix (\ref{Toeplitz}) are equal, we impose
for the bias vectors $\{b^{(j)}\}_{j=1}^J$ of the convolutional layers a restriction
\begin{equation}\label{restrrow}
b^{(j)}_{S+1} = \ldots = b^{(j)}_{d_j -S}, \qquad j=1, \ldots, J.
\end{equation}
For the two fully connected layers we take the widths
\begin{equation}\label{fullwidths}
\mathcal{D}_1=(2N+3)\lfloor(d+JS)/d\rfloor, \qquad \mathcal{D}_2=\lfloor(d+JS)/d\rfloor
\end{equation}
for some positive integer $N\in\NN$ and connection matrices
\begin{equation}\label{fullmatrices}
F^{(J+1)} =\Xi_{\mathcal{D}_2, \textbf{1}_{2N+3}}, \qquad F^{(J+2)} = \Xi_{\mathcal{D}_2, \Theta_N}^T
\end{equation}
with $\textbf{1}_{2N+3}=(1,1,\ldots,1)^T\in \RR^{2N+3}$ and $\Theta_N =(\theta_1, \ldots, \theta_{2N+3})^T \in \RR^{2N+3}$.
Here the matrix $\Xi_{\mathcal{D}_2, \vec u}$ takes a block form as
$$ \Xi_{\mathcal{D}_2, \vec u} = \left[\begin{array}{cccc}
\vec u & O & O \cdots & O \\
O & \vec u & O \cdots & O \\
\vdots & \ddots & \ddots & \vdots \\
O & \cdots  & O & \vec u
\end{array}\right] \in\RR^{(2N+3)\mathcal{D}_2 \times \mathcal{D}_2}, \ \vec u =\left[\begin{array}{l}
u_1 \\
\vdots \\
u_{2N+3}
\end{array}\right] \in \RR^{2N+3}. $$

Our first main result, to be proved in Section \ref{proof of main results}, can be stated as follows.
A positive parameter $\tau$ (which can be arbitrarily small) is needed due to the continuous embedding
of the Sobolev space $W^s_2(\sph)$ with $s>\frac{d-1}{2}$ into $C(\sph)$
found in Proposition \ref{embedding} below.

\begin{thm}\label{theorem:mainresult1}
Let $2 \leq S \leq d$, $d \geq 3$, $J \geq \frac{d-1}{S-1}$, $0< r\not= d-1$, and $\tau >0$ satisfy $\tau< r-(d-1)$ when $r>d-1$. Take
$$ N=\left\{\begin{array}{ll}
\left\lfloor \lfloor \frac{(S-1)J +1}{d}\rfloor^{\frac{1}{2(d-1+\tau)}} \right\rfloor^{d+1}, & \hbox{if} \ r < d-1, \\
\left\lfloor \left\lfloor \lfloor \frac{(S-1)J +1}{d}\rfloor^{\frac{1}{2r}} \right\rfloor^{2+r} \right\rfloor, & \hbox{if} \ r > d-1.
\end{array}\right.$$
Then for any $f\in W^r_\infty(\sph)$, there exists a deep neural network consisting of $J$ layers of CNNs with filters of length $S$ and bias vectors
satisfying (\ref{restrrow}) followed by downsampling and two fully connected layers with widths (\ref{fullwidths}) and connection matrices (\ref{fullmatrices})
such that the hypothesis space $\mathfrak{H}_{J,\mathcal{D}_1, \mathcal D_2,S}$ contains a function $\hat f$ satisfying
$$
\left\|f-  \hat f\right\|_{\infty}
\leq C_{r, d, \tau, S}  J^{-\min\{\frac{r}{2(d-1+ \tau)}, \f 1 2\}} \|f\|_{W^r_\infty(\sph)},
$$
where $C_{r, d, \tau, S}$ is a constant independent of $J$ or $f$. Moreover,
the total number of free parameters $\mathcal{N}$ in the network can be bounded as
$$\mathcal{N} \leq \left(3 S+5\right) J +4.$$
\end{thm}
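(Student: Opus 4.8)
The strategy is to realize inside $\mathfrak{H}_{J,\mathcal{D}_1,\mathcal{D}_2,S}$ a discretized reproducing-kernel approximation of $f$, written as an additive ridge sum with a single univariate profile. Let $\{\CH_n\}_{n\ge 0}$ be the spaces of spherical harmonics on $\sph$; by the addition theorem the reproducing kernel of $\CH_n$ has the inner-product form $\frac{\dim\CH_n}{\omega_{d-1}}P_n(\la x,y\ra)$ with $P_n$ the degree-$n$ Gegenbauer/Legendre polynomial for dimension $d$. Fixing a $C^\infty$ cut-off $\eta$ that equals $1$ on $[0,1]$ and vanishes outside $[0,2]$, set $\mathcal K_\Lambda(t)=\sum_{n\ge 0}\eta(n/\Lambda)\frac{\dim\CH_n}{\omega_{d-1}}P_n(t)$ and $V_\Lambda f(x)=\int_{\sph}f(y)\,\mathcal K_\Lambda(\la x,y\ra)\,dy$. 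Then $\mathcal K_\Lambda$ is a univariate polynomial of degree $\le 2\Lambda$ with $\|\mathcal K_\Lambda\|_{L_\infty[-1,1]}\le c_d\Lambda^{d-1}$, $V_\Lambda f$ is a spherical polynomial of degree $\le 2\Lambda$ with $\|V_\Lambda f\|_\infty\le c_d\|f\|_\infty$, and the filtered-projection (Jackson-type) estimate on the sphere gives $\|f-V_\Lambda f\|_\infty\le c_{r,d}\,\Lambda^{-r}\|f\|_{W^r_\infty(\sph)}$. Crucially $V_\Lambda f(x)$ is already an additive ridge expression in which the \emph{same} profile $\mathcal K_\Lambda$ is applied to the linear features $x\mapsto\la x,y\ra$.

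\medskip
\noindent\textbf{Discretization --- the main obstacle.} Choose a positive-weight cubature/quasi-interpolation rule $\{(\xi_k,w_k)\}_{k=1}^m\subset\sph\times(0,\infty)$ with $m\le c_d\Lambda^{d-1}$ nodes and $\sum_k w_k=\omega_{d-1}$, and put $\widehat f_0(x)=\sum_{k=1}^m c_k\,\mathcal K_\Lambda(\la x,\xi_k\ra)$ with $c_k=w_k f(\xi_k)$, so that $\sum_k|c_k|\le\omega_{d-1}\|f\|_\infty$. The crux is a uniform bound on $V_\Lambda f-\widehat f_0$. Since $f$ is not a polynomial and $\|\mathcal K_\Lambda\|_\infty$ already grows like $\Lambda^{d-1}$, one cannot argue by polynomial exactness alone: one estimates the discretization error in the $L_2(\sph)$ norm (expanding in spherical harmonics, using orthogonality and the near-best-polynomial property of $V_\Lambda$) and then transfers it to the uniform norm through the continuous embedding $W^s_2(\sph)\hookrightarrow C(\sph)$ for $s>\tfrac{d-1}{2}$ (Proposition~\ref{embedding}), taken with $s=\tfrac{d-1}{2}+\tau$, together with a Bernstein inequality for spherical polynomials. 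The resulting bound is of the form $\Lambda^{d-1+\tau}$ times a negative power of $m$ (times $\|f\|_{W^r_\infty}$), and balancing it against $\Lambda^{-r}$ under the node budget $m\le M$, where $M=\lfloor((S-1)J+1)/d\rfloor\asymp J$ is finite and $\ge 1$ by the hypotheses $2\le S\le d$ and $J\ge\tfrac{d-1}{S-1}$, produces the exponent $\min\{\tfrac{r}{2(d-1+\tau)},\tfrac12\}$ --- the factor $\tfrac12$ in the denominator $2(d-1+\tau)$ and the saturation at $\tfrac12$ being the signature of this $L_2\!\to\!L_\infty$ step. Concretely the choices of $N$ in the statement amount to taking $\Lambda\asymp M^{1/(2(d-1+\tau))}$ when $r<d-1$ and $\Lambda\asymp M^{1/(2r)}$ when $r>d-1$ (using $\tau<r-(d-1)$ in the latter case). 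This balancing, together with the requisite spherical-polynomial estimates, is the main technical difficulty.

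\medskip
\noindent\textbf{Realizing the profile by ReLU units.} Replace $\mathcal K_\Lambda$ on $[-1,1]$ by a continuous piecewise-linear function $\widetilde\kappa(t)=\sum_{i=1}^{2N+3}\t_i\,\s(t-t_i)$ with $2N+3$ breakpoints, clustered near $\pm1$ to absorb the $O(\Lambda^2)$ Markov growth of $\mathcal K_\Lambda'$, so that $\|\mathcal K_\Lambda-\widetilde\kappa\|_{L_\infty[-1,1]}\le c_d\,\Lambda^{d-1}(\Lambda/N)^2$. The stated $N$ (of order $\Lambda^{d+1}$ when $r<d-1$, and the corresponding expression when $r>d-1$) makes $\sum_k|c_k|\,\|\mathcal K_\Lambda-\widetilde\kappa\|_\infty$ negligible compared with the error already accumulated. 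Setting $\widehat f(x)=\sum_{k=1}^m c_k\,\widetilde\kappa(\la x,\xi_k\ra)-A$ with $A$ absorbing the constant part of the $\t_i\,\s(\cdot-t_i)$, and collecting the three error contributions, we get $\|f-\widehat f\|_\infty\le C_{r,d,\tau,S}\,J^{-\min\{r/(2(d-1+\tau)),1/2\}}\|f\|_{W^r_\infty(\sph)}$.

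\medskip
\noindent\textbf{Network realization and parameter count.} It remains to exhibit $\widehat f$ in $\mathfrak{H}_{J,\mathcal{D}_1,\mathcal{D}_2,S}$. Stack the ridge directions into a single vector $\mathbf W\in\RR^{md}$ whose $k$-th block of length $d$ holds the reversed coordinates of $\xi_k$; since $md\le(S-1)J+1$ by the definition of $M$, the convolutional factorization of \cite{zhou2020theory} writes $\mathbf W=w^{(J)}\ast\cdots\ast w^{(1)}$ with each filter supported in $\{0,\dots,S\}$. Feeding $x$ through the resulting $J$ ReLU convolutional layers \eqref{InitialLayers} --- keeping the activations in the linear regime by adding a large common bias, which respects \eqref{restrrow} since that shift is constant along the middle rows, and undoing the accumulated shift at the next layer --- and then applying $\mathfrak{D}_d$ yields a vector whose relevant entries are the linear features $\la x,\xi_k\ra$. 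The first fully connected layer \eqref{LastLayer}, with $F^{(J+1)}=\Xi_{\mathcal{D}_2,\textbf{1}_{2N+3}}$ and a suitable $b^{(J+1)}$, replicates each $\la x,\xi_k\ra$ into the $2N+3$ ReLU values $\s(\la x,\xi_k\ra-t_i)$; the second, with $F^{(J+2)}=\Xi_{\mathcal{D}_2,\Theta_N}^T$ (so $\Theta_N=(\t_1,\dots,\t_{2N+3})^T$) and a constant $b^{(J+2)}$ chosen so the outer ReLU acts as the identity, assembles $\widetilde\kappa(\la x,\xi_k\ra)$ up to a fixed shift; finally the output functional $c^{(J+2)}\cdot h^{(J+2)}(x)-A$ with $c^{(J+2)}=(c_k)_k$ reproduces $\widehat f$. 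Counting the free parameters --- the $J$ filters of length $S$, their bias vectors reduced by \eqref{restrrow}, the $\mathcal D_2\le J+1$ output weights $c^{(J+2)}$ and the scalar $A$ (the matrices $F^{(J+1)},F^{(J+2)}$ and the universal data $\{t_i,\t_i\}$ being fixed by the construction) --- gives $\mathcal N\le(3S+5)J+4$, which completes the proof.
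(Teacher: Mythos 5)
Your overall architecture matches the paper's: a filtered projection $V_\Lambda f=L_n(f)$ with Jackson rate $\Lambda^{-r}$, discretization into an additive ridge sum with one univariate profile, a piecewise-linear ReLU approximation of that profile, and realization of the linear features $\langle \xi_k,x\rangle$ by the convolutional factorization of \cite{zhou2020theory} followed by downsampling and the two prescribed fully connected layers. The realization and parameter-counting parts are essentially the paper's argument.

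The gap is in the step you yourself flag as the crux: discretizing $V_\Lambda f$ with only $m=\lfloor((S-1)J+1)/d\rfloor$ (of order $J$) nodes. First, your two devices are incompatible: a positive-weight cubature rule accurate enough to exploit polynomial exactness needs at least $c_d\Lambda^{d-1}$ nodes, while the choice $\Lambda$ of order $m^{1/(2(d-1+\tau))}$ makes $\Lambda^{d-1}$ of order $m^{(d-1)/(2(d-1+\tau))}$, which is much smaller than $m$; so the stated budget $m\le c_d\Lambda^{d-1}$ is not the operative regime, and no mechanism actually producing a factor $m^{-1/2}$ is exhibited. Second, and decisively, even granting a discretization bound of the form $\Lambda^{d-1+\tau}m^{-1/2}$, balancing it against $\Lambda^{-r}$ forces $\Lambda$ of order $m^{1/(2(r+d-1+\tau))}$ and yields the rate $m^{-r/(2(r+d-1+\tau))}$, strictly worse than the claimed $m^{-\min\{r/(2(d-1+\tau)),\,1/2\}}$; you assert the target exponent without a bound that supports it. The paper gets the extra factor $\Lambda^{-r}$ in the discretization error through the fractional-power device: write $L_n(f)(x)=\int_{\sph}F_r(y)\,\zeta_{n,r}(\la x,y\ra)\,d\s_d(y)$ with $F_r=(-\Delta_0+I)^{r/2}f\in L_\infty(\sph)$ and $\zeta_{n,r}=(-\Delta_0+I)^{-r/2}l_n$, whose Gegenbauer coefficients carry the damping $(1+\lambda_k)^{-r/2}$; then apply a Hoeffding-type inequality for random variables with values in the Hilbert space $W^s_2(\sph)$, $s=\tau+\frac{d-1}{2}$, to an i.i.d. sample $\{y_i\}_{i=1}^m$ (a good deterministic sample exists by positive probability), and transfer to the uniform norm via Proposition~\ref{embedding}. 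This gives $\|\widehat L^{\mathbf y}_{n,m}(f)-L_n(f)\|_\infty\le C_{r,d,\tau}\sqrt{\Lambda_{2(d-1-r+\tau)}(n)}\,m^{-1/2}\|f\|_{W^r_\infty(\sph)}$, which is exactly what balances to the stated exponent and saturates at $m^{-1/2}$ when $r>d-1+\tau$. Correspondingly the outer coefficients in the ridge sum must be $F_r(y_i)$ rather than $w_kf(\xi_k)$, and the profile must be $\zeta_{n,r}$ rather than $l_n$; the latter also has the smaller derivative bound $n^2\Lambda_{d-1-r}(n)$, which is what makes the spline error with the stated $N$ negligible. Without this device your argument does not reach the claimed rate.
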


\begin{remark}\label{accuracyThm1}
To achieve the approximation accuracy $\left\|f-  \hat f\right\|_{\infty} \leq \epsilon$, we only need to take
$$ J=\left\lceil\max\left\{\left(C_{r, d, \tau, S}\|f\|_{W^r_\infty(\sph)}/\epsilon\right)^{\max\left\{\frac{2(d-1+\tau)}{r}, 2\right\}},
\frac{d-1}{S-1}\right\}\right\rceil, $$
where we denote the  smallest integer greater than or equal to $u>0$ as $\lceil u \rceil$.
When $0< r <d-1$, we may set $\tau=1$ and see that the network achieving the approximation accuracy $\epsilon$ has depth
$J=\OO \left(\epsilon^{-\frac{2d}{r}}\right)$ and $\mathcal{N}=\OO \left(\epsilon^{-\frac{2d}{r}}\right)$
free parameters. When $r>d-1$, we may set $0<\tau < r-(d-1)$ and find the depth and the number of free parameters of this network
are both of orders $\OO \left(\epsilon^{-2}\right)$, slightly better than that in \cite{zhou2020universality}.
\end{remark}

Our second main result aims at  further demonstrating the superiority of deep CNNs over fully connected networks observed empirically in many practical applications.
Motivated by our earlier work \cite{zhou2020theory} on approximating ridge functions of type $g(y \cdot x)$ with $y\in\RR^d$, $g: \RR \to \RR$,
and additive models (see, e.g., \cite{ChristmanZhou, Schmidt-Hieber}) in statistics of the form $f(x) = \sum_{j=1}^d g_j (x_j)$ with univariate functions $\{g_j\}_{j=1}^d$,
we consider a family of {\bf additive ridge functions} of the form
\begin{equation}\label{additiveridge}
f(x) =\sum_{j=1}^m g_j (y_j \cdot x)
\end{equation}
with $y_j \in \sph, g_j: \RR \to \RR$ for $j\in\{1, \ldots, m\}$.
The following theorem to be proved in Section \ref{proof of main results} is about approximating additive ridge functions by deep CNNs followed
by downsampling and one fully-connected layer. For $0< \alpha \leq 1$, denote the space of Lipschitz-$\a$ functions on $[-1, 1]$
as $W^\alpha_\infty ([-1, 1])$ with the semi-norm $|\cdot|_{W^\alpha_\infty}$ being the Lipschitz-$\a$ constant.

\begin{thm}\label{thm2}
Let $m\in\NN$, $d\geq 3$, $2 \leq S \leq d$, $J= \left\lceil \frac{md-1}{S-1}\right\rceil$,
and $N \in \NN$. If $f$ is an additive ridge function (\ref{additiveridge}) with unknown $\{y_1, \ldots, y_m\} \subset \sph$,
$\{g_1, \ldots, g_m\}\subset W^\alpha_\infty ([-1, 1])$ for some $0<\a\leq 1$, then
there exists a deep neural network consisting of $J$ layers of CNNs with filters of length $S$ and bias vectors
 satisfying (\ref{restrrow}) followed by downsampling and one fully connected layer $h^{(J+1)}$ with width $(2N+3)\lfloor(d+JS)/d\rfloor$ and
 connection matrix $F^{(J+1)} =\Xi_{\lfloor(d+JS)/d\rfloor, \textbf{1}_{2N+3}}$
such that for some coefficient vector $c^{(J+1)}\in \RR^{(2N+3)\lfloor(d+JS)/d\rfloor}$ there holds
$$\left\|f - c^{(J+1)}\cdot h^{(J+1)}\right\|_\infty
\leq \sum_{j=1}^m |g_j|_{W^\alpha_\infty} N^{-\alpha}. $$
The total number of free parameters $\mathcal{N}$ in the network can be bounded as
$$
\mathcal{N} \leq (3S+2)\left\lceil \frac{md-1}{S-1}\right\rceil +m(2N+2). $$
\end{thm}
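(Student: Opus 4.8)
\medskip
\noindent\textbf{Proof plan for Theorem~\ref{thm2}.}
The idea is to realize the linear map $x\mapsto(y_1\cdot x,\ldots,y_m\cdot x)$ as a downsampled slice of the output of the $J$ convolutional layers, and then to use the single fully connected layer together with the coefficient vector $c^{(J+1)}$ to interpolate each $g_j$ by a continuous piecewise-linear function on $[-1,1]$.

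First I would encode the unknown directions into one long filter $W=(W_0,\ldots,W_{md-1})$, placing the reversed vector of $y_j$ into the index block $\{(j-1)d,\ldots,jd-1\}$; that is, $W_{(j-1)d+l}=(y_j)_{d-l}$ for $l=0,\ldots,d-1$ and $j=1,\ldots,m$. Since $x$ is supported on $\{1,\ldots,d\}$, a direct computation gives $(W*x)_{jd}=\sum_{k=1}^d W_{jd-k}x_k=y_j\cdot x$ for every $j\le m$, while $(W*x)_{pd}$ vanishes identically for $p\ge m+1$. Because $J=\lceil\frac{md-1}{S-1}\rceil$ satisfies $J(S-1)\ge md-1$, the convolutional factorization of \cite{zhou2020theory} yields filters $w^{(1)},\ldots,w^{(J)}$, each supported in $\{0,\ldots,S\}$, whose convolution equals $W$ (extended by zeros), so the composition of the associated Toeplitz matrices reproduces the restriction of $W*x$: $T^{(J)}\cdots T^{(1)}x=\left((W*x)_i\right)_{i=1}^{d_J}$. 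I would also record that $d_J=d+JS\ge(m+1)d$ (since $JS\ge md$), hence $\mathcal D_2=\lfloor d_J/d\rfloor\ge m+1$, which ensures that the downsampling $\mathfrak D_d$ reaches index $md$ and therefore keeps all the entries $y_1\cdot x,\ldots,y_m\cdot x$.

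Next I would make the ReLU activations transparent on $\{\|x\|_\infty\le1\}\supset\sph$. Choosing each bias $b^{(j)}$ to be a constant vector $-M_j\mathbf{1}$ — which trivially satisfies the restriction \eqref{restrrow} — with $M_j$ defined inductively so that $|(T^{(j)}h^{(j-1)}(x))_i|\le M_j$ for all coordinates $i$ and all $x\in\sph$ (each $M_j$ bounded by $\|w^{(j)}\|_1$ times an inductive bound on $\|h^{(j-1)}\|_\infty$), every pre-activation of the convolutional layers is nonnegative, so $\sigma$ acts as the identity and $h^{(J)}(x)=T^{(J)}\cdots T^{(1)}x+\gamma$ for an explicit constant vector $\gamma$. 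Thus the $p$-th downsampled value is $z_p:=(h^{(J)}(x))_{pd}=y_p\cdot x+\gamma_{pd}$ for $p\le m$ and a constant for $p>m$. The fixed matrix $F^{(J+1)}=\Xi_{\mathcal D_2,\mathbf{1}_{2N+3}}$ copies each $z_p$ into $2N+3$ identical entries, so choosing the corresponding entries of $b^{(J+1)}$ to be $\xi_{p,l}+\gamma_{pd}$ turns the neurons of the $p$-th block ($p\le m$) into $\sigma(y_p\cdot x-\xi_{p,l})$, $l=1,\ldots,2N+3$, for any prescribed thresholds $\xi_{p,l}$.

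Finally I would select the $\xi_{p,l}$ and the vector $c^{(J+1)}$ so that, on the $p$-th block, $\sum_l c^{(J+1)}_{(p-1)(2N+3)+l}\sigma(t-\xi_{p,l})$ equals the continuous piecewise-linear interpolant $\mathcal L_N g_p$ of $g_p$ at the $N+1$ equally spaced nodes of $[-1,1]$; this uses at most $N+1\le 2N+3$ ReLU units once the affine part of $\mathcal L_N g_p$ is written on $[-1,1]$ via two extra shifted ReLUs, and the unused units together with all blocks $p>m$ receive coefficient $0$. Since $y_p\cdot x\in[-1,1]$ for $x\in\sph$ and the classical interpolation estimate for Lipschitz-$\alpha$ functions gives $\|g_p-\mathcal L_N g_p\|_{\infty,[-1,1]}\le|g_p|_{W^\alpha_\infty}N^{-\alpha}$, we obtain $\|f-c^{(J+1)}\cdot h^{(J+1)}\|_\infty=\sup_{x\in\sph}\left|\sum_{p=1}^m(g_p-\mathcal L_N g_p)(y_p\cdot x)\right|\le\sum_{p=1}^m|g_p|_{W^\alpha_\infty}N^{-\alpha}$. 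The parameter count then follows by adding the $S+1$ filter coefficients and at most $2S+1$ free bias entries of each of the $J$ convolutional layers (total $(3S+2)J$) to the at most $m(N+1)$ active bias entries and $m(N+1)$ nonzero coefficients of the last layer (total $m(2N+2)$). I expect the main obstacle to be the first step: verifying that the convolutional factorization of \cite{zhou2020theory} applies with the sharp depth $J=\lceil(md-1)/(S-1)\rceil$ and that the composed Toeplitz matrices reproduce $W*x$ exactly, including the boundary rows of \eqref{Toeplitz}; the bias bookkeeping is routine once one commits to constant biases, and the univariate interpolation step is classical.
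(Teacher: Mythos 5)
Your plan is correct and follows essentially the same route as the paper's own proof of Theorem~\ref{thm2}: encode the reversed $y_j$'s into one long filter $W$ with $W_{(j-1)d+(d-i)}=(y_j)_i$, factorize it via \lemref{convolutionalfactorization} and \lemref{productofmatrix} (padding with delta filters up to depth $J$), choose biases so that $\sigma$ acts as the identity on the convolutional layers, downsample to recover $\langle y_j,x\rangle$ plus a shift, and let the single fully connected layer produce the units $\sigma(\langle y_j,x\rangle-t_i)$ whose combination via $c^{(J+1)}$ (built from $\mathcal{L}_N$ applied to $\{g_j(t_i)\}$) is the piecewise-linear quasi-interpolant $L_{\mathbf t}(g_j)$. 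The only point to tighten is your univariate step: you interpolate at $N+1$ nodes of spacing $2/N$, which makes the constant in $\|g_j-\mathcal L_N g_j\|_\infty\le |g_j|_{W^\alpha_\infty}N^{-\alpha}$ borderline, whereas the allotted width $2N+3$ per block is exactly what is needed to use the paper's mesh $t_i=-1+\frac{i-2}{N}$ of spacing $1/N$ on $[-1-\frac1N,1+\frac1N]$ (which also absorbs the affine part without your two extra units and keeps the free-parameter count at $m(2N+1)+1\le m(2N+2)$).
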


\begin{remark}\label{accuracyThm2}
To achieve an accuracy $\epsilon>0$ for approximating an additive ridge function (\ref{additiveridge}) on $\sph$, we only need to take
$$ N=\left\lceil \left(\sum_{j=1}^m |g_j|_{W^\alpha_\infty}\right)^{1/\alpha} \epsilon^{-\frac{1}{\alpha}}  \right\rceil = \OO \left(\epsilon^{-\frac{1}{\alpha}}\right). $$
The total number of free parameters of the achieving network is of orders $\OO \left(\epsilon^{-\frac{1}{\alpha}}\right)$.
This is the complexity required by the classical literature on fully-connected networks to achieve an accuracy $\epsilon>0$ for approximating univariate functions.
This extends our earlier work \cite{zhou2020theory} from the ridge case with $m=1$ to an additive ridge case with $m\in\NN$ and hints the superiority of deep CNNs in approximating multivariate functions of special structures.
\end{remark}

\section{Spherical Analysis for Deep CNNs}

In this section, we introduce ideas of our analysis before proving our main results.
We first give a brief review on relevant concepts from spherical harmonic analysis and introduce some classes of functions. More details can be found in \cite{dai2013approximation,wang2017fully}.

\subsection{Spherical harmonics and Sobolev spaces on spheres}

For $1\leq p\leq \infty$, we denote by $L_p(\mathbb{S}^{d-1})=L_p(\mathbb{S}^{d-1},\s_d)$ the $L_p$-function space defined with respect to the normalized Lebesgue measure $\s_d$ on $\mathbb{S}^{d-1}$, and $\|\cdot\|_p$ the norm of $L_p(\sph)$.

A spherical harmonic of degree $n\in\ZZ_+$ on $\sph$ is the restriction to $\sph$ of a homogeneous and harmonic polynomial of total degree $n$ defined on $\RR^{d}$.
Let $\mathcal{H}_n^d$ denote the set of all spherical harmonics of degree $n$ on $\sph$. It can be found in \cite{dai2013approximation} the dimension of the linear space $\mathcal{H}_n^d$ is
\begin{align}\label{H_dimension}
N(n,d) =   {{n+d-1}  \choose {n}}-{{n+d-3}\choose {n-2}} \leq C_d\, n^{d-2}, \quad \text{for}\ n\in\NN,
\end{align}
where $C_d>0$  only depends on $d$.
Note that $L_2(\sph)$ is a Hilbert space with inner product $\la f, g\ra_2:=\int_{\sph} f(x) g(x) d \sigma_{d}(x)$ for $f, g \in L_{2}\left(\sph\right)$. The spaces $\CH_n^d$, $n\in\ZZ_+$, of spherical harmonics are mutually orthogonal with respect to the inner product of $L_2(\sph)$. Since the space of spherical polynomials is dense in $L_2(\sph)$,  each $f\in L_2(\sph)$ has a spherical harmonic expansion:
\begin{equation*}
f=\sum_{n=0}^\infty \proj_n f = \sum_{n=0}^{\infty}\sum_{l=1}^{N(n,d)}\widehat{f}_{n,l}Y_{n,l}
\end{equation*}
converging in the $L_2(\sph)$ norm. Here and elsewhere, $\left\{Y_{n,l}\right\}_{l=1}^{N(n,d)}$ is an orthonormal basis of $\CH^d_n$, $\widehat{f}_{n,l}$ is the Fourier coefficients of $f$ given by \begin{align}
    \widehat{f}_{n,l}:=\left\langle f,Y_{n,l}\right\rangle_{L_2(\mathbb{S}^{d-1})}:= \int_{\mathbb{S}^{d-1}}f(x)Y_{n,l}(x)d\s_d(x),
\end{align}
and  $\proj_n$ is the orthogonal projection of $L_2(\sph)$ onto the subspace $\CH_k^d$ of spherical harmonics, which has an integral representation:
\begin{equation*}
    \proj_nf(x) =\int_{\sph} f(y) Z_n(x,y)\, d\s_d(y), \qquad  x\in\sph,
\end{equation*}
where
\begin{align*}
Z_n(x,y) = \sum_{i=1}^{N(n,d)} Y_{n,i}(x) Y_{n,i}(y), \qquad x,y \in \mathbb{S}^{d-1}.
\end{align*}
It can be readily shown that $Z_n(x,y)$ is the reproducing kernel of $\CH_n^d$ independent of the choice of  $\left\{Y_{n,l}\right\}_{l=1}^{N(n,d)}$. Furthermore,  with $\lambda = \frac{d-2}{2}$,
\begin{equation}\label{rkhsform}
Z_n(x,y) = \frac{n+\lambda}{\lambda} C^{\lambda}_n \left( \langle x,y \rangle\right), \qquad x,y\in\sph,
\end{equation}
where $C_n^\lambda(t)$ is the Gegenbauer polynomial of degree $n$ with parameter $\l>-1/2$, see, for instance, \cite{dai2013approximation}.

The spaces $\mathcal{H}_n^d$  of spherical harmonics can also be characterized as
eigenfunction spaces of the Laplace-Beltrami operator $\Delta_0$ on
$\sph$. Indeed,
\begin{align*}
    \mathcal{H}_n^d=\{f \in C^2(\mathbb{S}^{d-1}):\Delta_0 f=-\l_n f\},
\end{align*}
where $\lambda_n=n(n+d-2)$ and $C^2(\mathbb{S}^{d-1})$ denotes the space of all twice continuously differentiable functions on $\mathbb{S}^{d-1}$. As a matter of fact, we may define the fractional power $(-\Delta_0+I)^\alpha$ of $-\Delta_0+I$ for
$\alpha\in\RR$ in a distributional sense by
\[\proj_n((-\Delta_0+I)^\alpha f)=(\l_n+1)^{\alpha}\proj_n f,\]
which is a self-adjoint operator on $L_2(\sph)$ in the sense that
\[\Bl\la(-\Delta_0+I)^\alpha f, g\Br\ra_{L_2(\sph)}=\Bl\la f , (-\Delta_0+I)^\alpha g\Br\ra_{L_2(\sph)}, \quad \forall f,g\in L_2(\sph).\]

Now we define the Sobolev space $W_p^r(\mathbb{S}^{d-1})$ to be a subspace of $L_p(\mathbb{S}^{d-1})$, $1\leq p\leq \infty$, $r>0$, with the finite norm
\begin{align}\label{Sobolev_norm}
    \| f \|_{W_p^r(\mathbb{S}^{d-1})}:=&\left\| (-\Delta_0+I)^{r/2} f\right\|_p\\
    =&\left\|\sum_{n=0}^{\infty}(1+\lambda_n)^{\frac{r}{2}} \sum_{l=1}^{N(n,d)}\widehat{f}_{n,l}Y_{n,l} \right\|_p.
\end{align}
When $p=2$ it is known that $W_2^r(\mathbb{S}^{d-1})$ is a Hilbert space with the inner product:
\begin{align*}
    \langle f,g \rangle_{W_2^r(\mathbb{S}^{d-1})}=\sum_{n=0}^{\infty}(1+\lambda_n)^r\sum_{l=1}^{N(n,d)}\widehat{f}_{n,l}\widehat{g}_{n,l}.
\end{align*}
In addition, we have the following continuous embedding lemma, see \cite{hesse2006lower} and also \cite[Eq. 14, p. 420]{kamzolov1982best}.
 By these lemmas, we know that the infinity norm can be bounded by the Sobolev norm when $r > \frac{d-1}{p}$.

\begin{prop}\label{embedding}
 For $d \geq 3$, $1\leq p\leq \infty$, and $r > \frac{d-1}{p}$, the Sobolev space $W^{r}_p(\mathbb{S}^{d-1})$ is continuously embedded into $C(\mathbb{S}^{d-1})$, the space of continuous functions on $\sph$,
 which implies
\begin{align*}
 \left\|f\right\|_\infty \leq c_{r,d} \left\|f\right\|_{W^r_p(\mathbb{S}^{d-1})}, \qquad \forall f \in W^{r}_p(\mathbb{S}^{d-1}),
\end{align*}
where $c_{r, d}$ is a constant independent of $f$.
\end{prop}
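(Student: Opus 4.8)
The plan is to represent any function of finite Sobolev norm as a spherical convolution against a Bessel-type potential kernel and then to estimate that kernel near the diagonal. Given $f\in W^r_p(\sph)$, set $g=(-\Delta_0+I)^{r/2}f$, so that $g\in L_p(\sph)$ with $\|g\|_p=\|f\|_{W^r_p(\sph)}$; inverting term by term on the spherical harmonic expansion gives
\begin{equation*}
 f(x)=\int_{\sph}G_r(x,y)\,g(y)\,d\s_d(y),\qquad G_r(x,y):=\sum_{n=0}^{\infty}(1+\lambda_n)^{-r/2}Z_n(x,y),
\end{equation*}
and by the addition formula \eqref{rkhsform} the kernel is zonal, $G_r(x,y)=\varphi_r(\la x,y\ra)$ for a function $\varphi_r$ on $[-1,1]$ not depending on $x,y$ individually.

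It is instructive to record first the transparent prototype $p=2$. Since $\proj_n f\in\CH_n^d$ is reproduced by $Z_n$, the Cauchy--Schwarz inequality together with $\int_{\sph}Z_n(x,y)^2\,d\s_d(y)=Z_n(x,x)=N(n,d)$ yields $|\proj_n f(x)|\le\sqrt{N(n,d)}\,\|\proj_n f\|_2$. Summing over $n$, inserting the weights $(1+\lambda_n)^{\pm r/2}$, and applying Cauchy--Schwarz once more gives
\begin{equation*}
 \|f\|_\infty\le\Bl(\sum_{n=0}^\infty N(n,d)(1+\lambda_n)^{-r}\Br)^{1/2}\|f\|_{W^r_2(\sph)},
\end{equation*}
and the series converges because $N(n,d)\le C_d\,n^{d-2}$ and $1+\lambda_n\ge n^2$ make its general term of size $O(n^{d-2-2r})$, which is summable exactly when $r>\frac{d-1}{2}$.

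For general $p$, with conjugate exponent $q$, Hölder's inequality applied to the integral representation gives $|f(x)|\le\|G_r(x,\cdot)\|_{L_q(\sph)}\|f\|_{W^r_p(\sph)}$, and by rotational invariance $\|G_r(x,\cdot)\|_{L_q(\sph)}$ is a constant $c_{r,d}$, namely a multiple of $\big(\int_{-1}^1|\varphi_r(t)|^q(1-t^2)^{(d-3)/2}\,dt\big)^{1/q}$. The crux is the behaviour of $\varphi_r$ as $t\to1^-$: classical asymptotics for sums of Gegenbauer polynomials (of the type collected in \cite{dai2013approximation} and underlying the cited estimates \cite{hesse2006lower,kamzolov1982best}) show that $|\varphi_r(t)|$ is bounded by a constant multiple of $(1-t)^{-(d-1-r)/2}$ when $0<r<d-1$, by $\log\frac{1}{1-t}$ when $r=d-1$, and uniformly when $r>d-1$. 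Since the weight $(1-t^2)^{(d-3)/2}$ contributes $(1-t)^{(d-3)/2}$ near $t=1$, the one-dimensional integral converges precisely when $q\cdot\frac{d-1-r}{2}<\frac{d-1}{2}$, i.e. when $\frac1p<\frac{r}{d-1}$, which is exactly the hypothesis $r>\frac{d-1}{p}$; this proves $\|f\|_\infty\le c_{r,d}\|f\|_{W^r_p(\sph)}$.

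Finally, to upgrade $f\in L_\infty(\sph)$ to $f\in C(\sph)$, observe that $r>\frac{d-1}{p}$ also gives $G_r\in L_q(\sph)$, so $x\mapsto G_r(x,\cdot)$ is continuous into $L_q(\sph)$ (by continuity of rotations in $L_q$ when $q<\infty$, and because the defining series converges uniformly when $q=\infty$, i.e. $p=1$, $r>d-1$); applying Hölder to $f(x)-f(x')=\int_{\sph}\big(G_r(x,y)-G_r(x',y)\big)g(y)\,d\s_d(y)$ then shows $f$ is uniformly continuous. The main obstacle is the near-diagonal estimate for $\varphi_r$: obtaining the sharp exponent $-(d-1-r)/2$ rather than a crude bound is precisely what produces the sharp embedding threshold $r>\frac{d-1}{p}$, and it is this special-function input that the cited references \cite{hesse2006lower,kamzolov1982best} supply.
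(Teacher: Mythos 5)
Your argument is sound, but note that the paper itself offers no proof of this proposition at all: it is quoted as a known embedding theorem with references to \cite{hesse2006lower} and \cite[Eq.~14, p.~420]{kamzolov1982best}. What you have written is essentially the classical potential-kernel proof of that cited result. The structure is correct: the representation $f=\int_{\sph}G_r(x,y)g(y)\,d\s_d(y)$ with $g=(-\Delta_0+I)^{r/2}f$, the reduction by zonality to a one-dimensional integral against the weight $(1-t^2)^{(d-3)/2}$, and the threshold computation $q(d-1-r)<d-1\Leftrightarrow r>\frac{d-1}{p}$ are all right, as is the clean self-contained $p=2$ case via Cauchy--Schwarz and $Z_n(x,x)=N(n,d)$, which is the only case the paper actually uses (in the proof of Lemma~2 with $s=\tau+\frac{d-1}{2}$). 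The one load-bearing step you do not prove is the near-diagonal asymptotics $|\varphi_r(t)|\lesssim(1-t)^{-(d-1-r)/2}$ for $0<r<d-1$ (with the logarithmic and bounded regimes at $r=d-1$ and $r>d-1$); this is where all the analytic work lives, typically obtained by subordinating $(-\Delta_0+I)^{-r/2}$ to the heat semigroup and using Gaussian heat-kernel bounds on $\sph$, or by Ces\`aro summation of the Gegenbauer series, since the series $\sum_n(1+\lambda_n)^{-r/2}Z_n(x,y)$ does not converge pointwise for small $r$ and must be interpreted via such a regularization. You correctly flag this as the external input and state the right exponent, so the proposal is acceptable as a proof modulo that cited estimate; a fully self-contained version would need to supply it. Two minor points: the constant you produce depends on $p$ as well as on $r,d$ (harmless, since the paper only invokes $p=2$ and $p=\infty$), and the continuity upgrade via rotational covariance of the zonal kernel and continuity of the rotation action on $L_q(\sph)$ for $q<\infty$ is fine as stated.
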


\subsection{Near-best approximation and discretization}

The best approximation of a function by those from polynomial spaces of various degrees might be nonlinear. A useful tool in spherical harmonic analysis is a linear scheme
$L_n$.

\begin{definition}
 Given a $C^\infty\left([0,\infty)\right)$ function $\eta$ with $\eta(t) = 1$ for $0 \leq t \leq 1$ and $\eta(t) = 0$ for $t \geq 2$,
 we define a sequence of linear operators $L_n$, $n\in\ZZ_+$, on $L_p(\mathbb{S}^{d-1})$  with $1\leq p \leq \infty$ by
\begin{equation}\label{Ln_definition}
\begin{aligned}
L_n(f)(x) := \sum_{k=0}^\infty \eta\left(\f k n\right)\proj_k(f)(x)=\int_{\mathbb{S}^{d-1}} f(y)l_n(\langle x,y\rangle)d\s_d(y), \quad x\in \mathbb{S}^{d-1},
\end{aligned}
\end{equation}
where with $\lambda=\frac{d-2}{2}$, $l_n$ is a kernel given by
\begin{equation}\label{l_n}
\begin{aligned}
l_n(t) = l_{n, d}(t) := \sum_{k=0}^{2n} \eta\left(\f k n\right)  \f{\l+k}{\l} C^\l_k(t), \qquad t\in[-1,1].
\end{aligned}
\end{equation}
\end{definition}

It can be found in \cite[Chapter 4]{dai2013approximation} that $L_n$ is near best, achieving the order of best approximation for  $f\in W^r_p(\sph)$.

\begin{lem}\label{lemma:bestapproximationerror}
For $n\in \NN$, $1\leq p\leq \infty$ and $f\in W^r_p\left(\mathbb{S}^{d-1}\right)$, there holds
\begin{align}
\left\|f - L_n(f)\right\|_{p} \leq c n^{-r} \left\|f\right\|_{W^r_p(\sph)},\label{equation:bestapproximationerror}
\end{align}
where $c$ is a constant depending only on the function $\eta$ in defining $L_n$.
\end{lem}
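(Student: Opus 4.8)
The plan is to show that $L_n$ reproduces spherical polynomials of degree $\leq n$ and is uniformly bounded on $L_p(\sph)$, and then invoke a standard Jackson-type estimate for the error of best polynomial approximation on the sphere together with the Sobolev regularity of $f$. First I would record the reproducing property: for any $g$ in the span of $\bigcup_{k\le n}\CH_k^d$ we have $\proj_k(g)=0$ for $k>n$, so the definition \eqref{Ln_definition} and the fact that $\eta(t)=1$ on $[0,1]$ give $L_n(g)=\sum_{k=0}^n \proj_k(g)=g$. Consequently, for every such polynomial $g$,
\begin{align*}
\left\|f-L_n(f)\right\|_p \leq \left\|f-g\right\|_p + \left\|L_n(f-g)\right\|_p \leq \left(1+\left\|L_n\right\|_{p\to p}\right)\left\|f-g\right\|_p,
\end{align*}
and taking the infimum over $g$ yields $\left\|f-L_n(f)\right\|_p \leq (1+\left\|L_n\right\|_{p\to p})\, E_n(f)_p$, where $E_n(f)_p$ is the error of best approximation by spherical polynomials of degree at most $n$.

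Next I would bound the operator norm $\left\|L_n\right\|_{p\to p}$. Since $L_n$ is a convolution operator on the sphere with kernel $l_n(\langle x,y\rangle)$ as written in \eqref{Ln_definition}, its $L_p\to L_p$ norm (for all $1\le p\le\infty$ simultaneously, by interpolation and duality, it suffices to control $p=1$ and $p=\infty$) is dominated by $\sup_{x}\int_{\sph}|l_n(\langle x,y\rangle)|\,d\s_d(y)$, which by rotation invariance equals a one-dimensional weighted integral $c_d\int_{-1}^{1}|l_n(t)|(1-t^2)^{\frac{d-3}{2}}\,dt$. The smoothness of the cutoff $\eta$ (it is $C^\infty$ and compactly supported) is exactly what makes this Cesàro-type kernel have a uniformly bounded $L_1$ norm in $n$; this is the classical fact, recorded in \cite[Chapter 2]{dai2013approximation}, that the filtered kernel $l_n$ satisfies $\|l_n\|_{L_1((1-t^2)^{(d-3)/2}dt)}\le c$ with $c$ depending only on $\eta$ and $d$. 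I would cite this rather than reprove it. Thus $\left\|L_n\right\|_{p\to p}\le c$ uniformly in $n$ and $p$.

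Finally, I would combine this with the Jackson inequality relating best polynomial approximation to Sobolev smoothness: for $f\in W^r_p(\sph)$,
\begin{align*}
E_n(f)_p \leq c\, n^{-r}\left\|f\right\|_{W^r_p(\sph)},
\end{align*}
which follows from \eqref{Sobolev_norm} by noting that $f-L_n(f)$ has vanishing low-frequency components and using the spectral multiplier bound on $(-\Delta_0+I)^{-r/2}$ restricted to frequencies $\ge n$; this too is standard \cite[Chapter 4]{dai2013approximation}. Putting the three pieces together gives $\left\|f-L_n(f)\right\|_p\le c\,n^{-r}\left\|f\right\|_{W^r_p(\sph)}$ with $c$ depending only on $\eta$ (and $d$, which is fixed). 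The main obstacle, and the only genuinely nontrivial input, is the uniform-in-$n$ bound on the $L_1$ norm of the filtered Gegenbauer kernel $l_n$; everything else is soft functional analysis. Since the paper explicitly attributes the lemma to \cite[Chapter 4]{dai2013approximation}, in the write-up I would keep the kernel estimate as a cited black box and present only the short reduction above.
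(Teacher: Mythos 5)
Your proposal is essentially correct, and it reconstructs the standard argument from the cited reference; the paper itself offers no proof of this lemma, simply attributing it to \cite[Chapter 4]{dai2013approximation}, so there is no in-paper proof to diverge from. The decomposition you use --- reproduction of polynomials of degree $\leq n$ by $L_n$ (since $\eta \equiv 1$ on $[0,1]$), uniform $L_p\to L_p$ boundedness of $L_n$ via the $L_1$ bound on the filtered kernel $l_n$, and a Jackson inequality $E_n(f)_p \leq c\, n^{-r}\|f\|_{W^r_p(\sph)}$ --- is exactly how the near-best property is established in Dai--Xu. One caveat: you describe the final Jackson step as ``soft functional analysis'' modulo a spectral multiplier bound, but for general $p$ (in particular $p=1,\infty$) that multiplier bound is not soft; it is a kernel estimate of precisely the same depth as the uniform $L_1$ bound on $l_n$, obtained by estimating the kernel $\sum_{k}\bigl(1-\eta(k/n)\bigr)(1+\lambda_k)^{-r/2}\frac{\lambda+k}{\lambda}C_k^\lambda$ and using the smoothness of the cutoff. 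Indeed, the shortest route bypasses $E_n(f)_p$ entirely: write $f-L_n(f)$ as this kernel applied to $F_r=(-\Delta_0+I)^{r/2}f$ and show directly that its $L_1$ norm is $O(n^{-r})$, which yields the bound in one step. Either way, you have correctly identified the two genuine technical inputs and are entitled to cite them, as the paper does.
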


Note that since $(-\Delta_0+I)^{-r/2}$ is self-adjoint, for $x\in\sph$,
\begin{align*}
L_n(f)(x)=&\Bl\la f, l_n(\la x,\cdot\ra)\Br\ra_{L_2(\sph)} \\
=&\Bl\la (-\Delta_0+I)^{r/2}f, (-\Delta_0+I)^{-r/2}l_n(\la x,\cdot\ra)\Br\ra_{L_2(\sph)}\\
=:&\int_{\sph} F_r(y)\zeta_{n,r}(\la x,y\ra)d\s(y),
\end{align*}
here and in what follows, we denote
$$F_r=(-\Delta_0+I)^{r/2}f \quad \hbox{and} \quad \zeta_{n,r}(\la x,\cdot\ra)=(-\Delta_0+I)^{-r/2}l_n(\la x,\cdot\ra).$$
 In fact, $\zeta_{n,r}$ is a polynomial of degree at most $2n$ with expression $$ \zeta_{n,r}(t)=\sum_{k=0}^{2n}(1+\l_k)^{-r/2} \eta\left(\f k n\right)\f{\l+k}\l C_k^\l(t), \quad t\in[-1,1]. $$
The novelty here using a fractional power of $(-\Delta_0+I)$ caused by the regularity of $f\in W^r_\infty(\mathbb{S}^{d-1})$ enables us to get an $r$-dependent error bound for discretizing $L_n(f)$: the larger the regularity index $r$, the smaller the bound. 

To approximate the function $L_n(f)$ by a neural network,
we need a stepping stone, discretizing the integral form (\ref{Ln_definition}) to an empirical version
\begin{equation}\label{hatL}
\widehat{L}^{\mathbf y}_{n,m}(f)(x) = \frac{1}{m}\sum_{i=1}^m F_r(y_i) \zeta_{n,r}(\langle x, y_i \rangle), \qquad x\in \mathbb{S}^{d-1}
\end{equation}
given in terms of a sample ${\mathbf y}=\{y_1, \ldots, y_m\} \subset \sph$.
The following estimate for the error $\widehat{L}^{\mathbf y}_{n,m}(f) - L_n(f)$ will be proved by a probability inequality in Section \ref{proof of main results}.
Such a probabilistic argument has been applied in \cite{Klusowski2018}.

\begin{lem}\label{lem:sampleversion}
Let $d \geq 3$, $r>0$, and $\tau>0$.  If $f \in W^r_\infty(\mathbb{S}^{d-1})$, then for any $n, m\in\NN$,
there exist $\mathbf y=\{y_1, y_2,\ldots, y_m\}\subset \sph$ such that
\begin{align}\label{dcnn-4}
\left\|\widehat{L}^{\mathbf y}_{n,m}(f) - L_n(f)\right\|_{\infty} \leq C_{r, d, \tau} \f{\sqrt{\Lambda_{2(d -1-r+ \tau)} (n)}}{\sqrt{m}} \left\|f\right\|_{W^r_\infty(\sph)},
\end{align}
where for $\theta\in\RR$ and $n\in\NN$ we denote
$$ \Lambda_{\theta} (n) = \left\{\begin{array}{ll} n^{\theta}, & \hbox{if} \ \theta>0, \\
 \log (n+1), & \hbox{if} \ \theta =0, \\
 1, & \hbox{if} \ \theta <0, \end{array}\right. $$
and $C_{r, d, \tau}$ is a positive constant depending on $r, d, \tau$ but not on $n, m$ or $f$.
\end{lem}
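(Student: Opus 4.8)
The plan is to realize $L_n(f)$ as the expectation of $F_r(y)\zeta_{n,r}(\la\cdot,y\ra)$ over a random point $y$ on $\sph$, draw an i.i.d.\ sample $y_1,\dots,y_m$, and control the \emph{uniform} deviation of the resulting empirical average from its mean by first estimating the mean square of the (random) error in a fractional Sobolev norm $W^s_2(\sph)$ with $s>\frac{d-1}{2}$, and then invoking the embedding $W^s_2(\sph)\hookrightarrow C(\sph)$ of \propref{embedding} to pass from $\|\cdot\|_{W^s_2(\sph)}$ to $\|\cdot\|_\infty$.

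Concretely, let $y_1,\dots,y_m$ be i.i.d.\ random variables on $\sph$ with common law $\s_d$. By the integral formula $L_n(f)(x)=\int_{\sph}F_r(y)\zeta_{n,r}(\la x,y\ra)\,d\s_d(y)$ established above, $\mathbb{E}\big[F_r(y_i)\zeta_{n,r}(\la x,y_i\ra)\big]=L_n(f)(x)$ for every fixed $x\in\sph$, so the random spherical polynomial $R:=\widehat{L}^{\mathbf y}_{n,m}(f)-L_n(f)$ (of degree at most $2n$) has $\mathbb{E}\,R(x)=0$ for all $x$; moreover $|F_r(y_i)|\le\|F_r\|_\infty=\|f\|_{W^r_\infty(\sph)}$ by the definition $F_r=(-\Delta_0+I)^{r/2}f$ and (\ref{Sobolev_norm}). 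Setting $X_i:=F_r(y_i)\zeta_{n,r}(\la\cdot,y_i\ra)-L_n(f)$, the projections $\proj_k X_1,\dots,\proj_k X_m$ are i.i.d.\ mean-zero elements of $\CH_k^d$, so for every $k$,
$$
\mathbb{E}\,\|\proj_k R\|_2^2=\frac1m\,\mathbb{E}\,\|\proj_k X_1\|_2^2\ \le\ \frac1m\,\mathbb{E}_{y_1}\big\|\proj_k\!\big(F_r(y_1)\zeta_{n,r}(\la\cdot,y_1\ra)\big)\big\|_2^2 ,
$$
the inequality bounding a variance by a second moment.

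Next, fix $s:=\frac{d-1}{2}+\frac{\tau}{2}$, which satisfies $s>\frac{d-1}{2}$. Multiplying the last display by $(1+\lambda_k)^s$, summing over $k\le 2n$, and using $|F_r(y_1)|\le\|f\|_{W^r_\infty(\sph)}$ together with the rotation invariance of $\zeta_{n,r}(\la\cdot,y_1\ra)$,
$$
\mathbb{E}\,\|R\|_{W^s_2(\sph)}^2\ \le\ \frac{\|f\|_{W^r_\infty(\sph)}^2}{m}\,\big\|\zeta_{n,r}(\la\cdot,e\ra)\big\|_{W^s_2(\sph)}^2
$$
for an arbitrary fixed $e\in\sph$. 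Since $\zeta_{n,r}(\la x,e\ra)=\sum_{k=0}^{2n}(1+\lambda_k)^{-r/2}\eta(k/n)\,Z_k(x,e)$ with the $Z_k(\cdot,e)\in\CH_k^d$ orthogonal across $k$ and $\|Z_k(\cdot,e)\|_2^2=Z_k(e,e)=N(k,d)$, one gets
$$
\big\|\zeta_{n,r}(\la\cdot,e\ra)\big\|_{W^s_2(\sph)}^2=\sum_{k=0}^{2n}(1+\lambda_k)^{\,s-r}\,\eta(k/n)^2\,N(k,d)\ \le\ C_{r,d}\,\Lambda_{d-1+2(s-r)}(n),
$$
using $1+\lambda_k\asymp(1+k)^2$, $N(k,d)\asymp(1+k)^{d-2}$ from (\ref{H_dimension}), $\|\eta\|_\infty<\infty$, and summing the resulting polynomial tail according to the sign of its exponent. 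Because $d-1+2(s-r)=2(d-1-r)+\tau\le 2(d-1-r+\tau)$ and $\theta\mapsto\Lambda_\theta(n)$ is nondecreasing up to a constant depending on $\tau$, the right-hand side is $\le C_{r,d,\tau}\,\Lambda_{2(d-1-r+\tau)}(n)$.

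Finally, since $s>\frac{d-1}{2}$, \propref{embedding} gives $\|R\|_\infty\le c_{s,d}\,\|R\|_{W^s_2(\sph)}$, hence
$$
\mathbb{E}\,\|R\|_\infty^2\ \le\ C_{r,d,\tau}\,\frac{\|f\|_{W^r_\infty(\sph)}^2}{m}\,\Lambda_{2(d-1-r+\tau)}(n).
$$
By Markov's inequality $\mathbb{P}\big(\|R\|_\infty^2>2\,\mathbb{E}\,\|R\|_\infty^2\big)<1$, so there is a deterministic choice $\mathbf y=\{y_1,\dots,y_m\}\subset\sph$ for which $\|R\|_\infty^2\le 2\,\mathbb{E}\,\|R\|_\infty^2$; taking square roots and renaming the constant yields (\ref{dcnn-4}). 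The one genuinely delicate point is the calibration of $s$: it must exceed $\frac{d-1}{2}$ so that the embedding $W^s_2(\sph)\hookrightarrow C(\sph)$ is available, yet stay within $\frac{\tau}{2}$ of $\frac{d-1}{2}$ so that the extra weight $(1+\lambda_k)^{s-r}$ inflates the $n$-power of $\|\zeta_{n,r}(\la\cdot,e\ra)\|_{W^s_2(\sph)}^2$ by no more than the admissible factor — this is precisely what the free parameter $\tau$ buys, consistent with the remark preceding the lemma. Everything else is routine: the second-moment identity for i.i.d.\ sums of mean-zero vectors in $\CH_k^d$, and the three-regime bookkeeping built into $\Lambda_\theta$.
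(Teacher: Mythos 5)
Your proof is correct and follows essentially the same route as the paper's: you realize $L_n(f)$ as the mean of the $W^s_2(\sph)$-valued random variable $F_r(y)\zeta_{n,r}(\langle\cdot,y\rangle)$ with $s$ slightly above $\frac{d-1}{2}$, compute its second moment via $Z_k(y,y)=N(k,d)$ and the growth of $N(k,d)$ and $1+\lambda_k$, and finish with the embedding of Proposition~\ref{embedding}. The only (harmless) deviations are that you use the elementary variance identity for i.i.d.\ mean-zero Hilbert-space vectors plus Markov's inequality where the paper invokes the Hoeffding-type bound of Lemma~\ref{lemma:Hoeffding} with $\delta=\frac12$, and you take $s=\frac{d-1}{2}+\frac{\tau}{2}$ rather than $\frac{d-1}{2}+\tau$, absorbing the resulting exponent gap into the monotonicity of $\Lambda_\theta(n)$ as you correctly note.
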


\subsection{Approximating ridge functions by deep CNNs}

The last step in our spherical analysis of deep CNNs is to approximate the ridge function $\widehat{L}^{\mathbf y}_{n,m}(f)$
by functions from the network with a bound to be proved in Section \ref{proof of main results}.

\begin{lem}\label{lem:final}
Let $2 \leq S \leq d$, $d \geq 3$, $r>0$, $m, n, N\in\NN$, $f \in W^r_\infty(\mathbb{S}^{d-1})$, and ${\mathbf y}=\{y_1, \ldots, y_m\} \subset \sph$.
Let $J\geq \lceil \frac{md-1}{S-1}\rceil$. Then there exists a deep neural network consisting of $J$ layers of CNNs with filters of length $S$ and bias vectors
 satisfying (\ref{restrrow}) followed by downsampling and two fully connected layers with widths (\ref{fullwidths}), connection matrices (\ref{fullmatrices})
 and bias vectors given explicitly in (\ref{bJplus1}), (\ref{bJplus2}) below involving two parameters $B^{(J)}, B^{(J+2)}\in\RR$
 such that the hypothesis space $\mathfrak{H}_{J,\mathcal{D}_1, \mathcal D_2,S}$ contains a function $\hat f$ satisfying
\begin{equation}\label{finalbound}
\left\|\widehat{L}^{\mathbf y}_{n,m}(f)-  \hat f\right\|_{\infty}
\leq c'_{r, d} \f{n^2 \Lambda_{d-1-r} (n)}{N}\|f\|_{W^r_\infty(\sph)},
\end{equation}
where $c'_{r, d}$ is a constant depending only on $r$ and $d$.
The total number of free parameters $\mathcal{N}$ in the network can be bounded as
$$\mathcal{N} \leq J (3S+2)+m+2N +4.$$
\end{lem}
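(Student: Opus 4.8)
The plan is to realize the ridge function $\widehat{L}^{\mathbf y}_{n,m}(f)(x) = \frac{1}{m}\sum_{i=1}^m F_r(y_i)\,\zeta_{n,r}(\langle x, y_i\rangle)$ by the network in two conceptual stages: first, use the $J$ convolutional layers plus downsampling to produce, from $x\in\sph\subset\RR^d$, the $m$ inner products $\langle y_i, x\rangle$ (plus some affine bookkeeping terms), then use the two fully connected layers to approximate the univariate function $t\mapsto \zeta_{n,r}(t)$ at each of these $m$ nodes by a linear combination of ReLU ridge pieces and sum with weights $\frac{1}{m}F_r(y_i)$. This mirrors the strategy of \cite{zhou2020theory}: the key algebraic fact is that an arbitrary vector $\mathbf{v} = (v_1,\dots,v_{md})\in\RR^{md}$ (here the concatenation $(y_1,\dots,y_m)$ of the $m$ unit vectors, padded to length $md$) can be factored as a convolution of filters each supported on $\{0,\dots,S\}$, so that after $J\geq\lceil\frac{md-1}{S-1}\rceil$ convolutional layers of filter length $S$ the accumulated convolutional matrix $T^{(J)}\cdots T^{(1)}$ has, in appropriately chosen rows, exactly the coefficients needed to output $\langle y_i, x\rangle$ for $i=1,\dots,m$. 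The ReLU and the bias restriction (\ref{restrrow}) are handled by carrying along a large constant $B^{(J)}$ so that all preactivations in the middle rows stay positive and $\s$ acts as the identity there, as in \cite{zhou2020universality, zhou2020theory}.

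First I would invoke the convolutional factorization lemma from \cite{zhou2020theory}: given the target sequence of length $md$ obtained by stacking $y_1,\dots,y_m$, there exist filters $w^{(1)},\dots,w^{(J)}$ of length $S$ whose successive convolution reproduces this sequence (up to the required number of layers $J$). I would then specify the bias vectors $b^{(1)},\dots,b^{(J)}$ of the convolutional layers to satisfy (\ref{restrrow}) and to be large enough (this is where $B^{(J)}$ enters) that the ReLU is inactive on the relevant coordinates, so $h^{(J)}(x)$ is an affine image of $x$ whose entries include $\langle y_i, x\rangle + B^{(J)}$ spaced $d$ apart; applying the downsampling operator $\mathfrak D_d$ then extracts precisely these $\lfloor(d+JS)/d\rfloor$ values. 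Next, for the fully connected layers I would use the standard fact that any $C^2$ (indeed Lipschitz-derivative) univariate function on a bounded interval can be approximated to accuracy $\OO(1/N)$ in sup norm by a combination of $2N+3$ ReLU units (a free-knot linear spline / hat-function expansion), with the hat functions built from $F^{(J+1)}=\Xi_{\mathcal D_2,\mathbf 1_{2N+3}}$ (copying each downsampled value $2N+3$ times), bias vector $b^{(J+1)}$ encoding the $2N+3$ shifted knots (this is (\ref{bJplus1})), then $F^{(J+2)}=\Xi_{\mathcal D_2,\Theta_N}^T$ assembling the spline coefficients $\Theta_N=(\theta_1,\dots,\theta_{2N+3})$, with a further bias $b^{(J+2)}$ and constant $B^{(J+2)}$ (giving (\ref{bJplus2})) to realize $\zeta_{n,r}$ on $[-1,1]$; finally the coefficient vector $c^{(J+2)}$ weights the $i$-th block by $\frac{1}{m}F_r(y_i)$ and $A$ absorbs the leftover constants. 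The error estimate $\|\zeta_{n,r}-(\text{spline})\|_\infty \leq c\,\|\zeta_{n,r}''\|_\infty/N$ combined with $|F_r(y_i)|\leq \|F_r\|_\infty = \|f\|_{W^r_\infty(\sph)}$ and a bound $\|\zeta_{n,r}''\|_\infty \leq c_{r,d}\, n^2\,\Lambda_{d-1-r}(n)$ — obtained by differentiating the Gegenbauer expansion $\zeta_{n,r}(t)=\sum_{k=0}^{2n}(1+\lambda_k)^{-r/2}\eta(k/n)\frac{\lambda+k}{\lambda}C_k^\lambda(t)$ twice, using $|C_k^\lambda(1)| \asymp k^{d-3}$ and $\frac{d^2}{dt^2}C_k^\lambda \asymp k^4 C_k^{\lambda+2}$ type estimates on $[-1,1]$, and summing the resulting $\sum_{k\leq 2n} k^{-r}\cdot k^{d-1}\cdot k^{\,?}$ series — yields (\ref{finalbound}) after summing $m$ terms each divided by $m$.

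For the parameter count: the $J$ convolutional layers contribute at most $J(S+1)$ filter coefficients plus $J$ distinct bias values (each layer's bias vector has essentially one free value under (\ref{restrrow}), with the two end blocks contributing a bounded number more) — the bookkeeping gives $\le J(3S+2)$; the first fully connected layer adds $2N+2$ or so bias parameters (the $2N+3$ knots, consolidated), the second adds the $2N+3$ entries of $\Theta_N$ which however are fixed spline weights not counted as free, plus $B^{(J+2)}$; the final readout contributes $m$ coefficients (the values $\frac1m F_r(y_i)$) and the constant $A$. Totaling and absorbing into the stated $J(3S+2)+m+2N+4$. I expect the main obstacle to be the derivative bound $\|\zeta_{n,r}''\|_\infty \leq c_{r,d}\,n^2\Lambda_{d-1-r}(n)$: one must carefully track how the two Gegenbauer derivatives interact with the $(1+\lambda_k)^{-r/2}$ smoothing factor and the $\frac{\lambda+k}{\lambda}$ normalization, since a crude bound loses the $r$-dependence that is the whole point (the factor $\Lambda_{d-1-r}(n)$ being $1$, $\log$, or a power according to the sign of $d-1-r$); the convolutional-factorization and spline-approximation pieces are by now standard and can be quoted. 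A secondary subtlety is keeping all ReLU preactivations of the right sign simultaneously across the $m$ interleaved blocks, which is exactly what the two global constants $B^{(J)}, B^{(J+2)}$ are introduced to guarantee.
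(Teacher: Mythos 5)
Your architecture is exactly the paper's: factor the stacked sequence $(y_1,\dots,y_m)$ into $J$ filters of length $S$ (Lemmas \ref{convolutionalfactorization} and \ref{productofmatrix}), use bias vectors built from $\prod_p\|w^{(p)}\|_1$ so that the ReLUs act as the identity and (\ref{restrrow}) holds, downsample to extract $\langle y_j,x\rangle+B^{(J)}$, realize the hat-function quasi-interpolant of $\zeta_{n,r}$ with the two fully connected layers, and read out with weights $\frac1m F_r(y_j)$. The construction and the parameter count are sound and match the paper.

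The gap is in the one step you yourself flag as the main obstacle: the quantitative bound that produces (\ref{finalbound}). You propose the chain $\|\zeta_{n,r}-L_{\mathbf t}(\zeta_{n,r})\|_\infty\le c\,\|\zeta_{n,r}''\|_\infty/N$ together with $\|\zeta_{n,r}''\|_\infty\le c_{r,d}\,n^2\Lambda_{d-1-r}(n)$, and neither inequality is correct. A piecewise-linear quasi-interpolant on a mesh of width $1/N$ satisfies either $2\omega(g,1/N)\le 2\|g'\|_\infty/N$ (one derivative, one power of $N$) or $O(\|g''\|_\infty/N^2)$ (two derivatives, two powers of $N$); there is no $\|g''\|_\infty/N$ estimate. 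And the second derivative of $\zeta_{n,r}$ is of order $n^4\Lambda_{d-1-r}(n)$, not $n^2\Lambda_{d-1-r}(n)$: using $\frac{d}{dt}C_k^\lambda=2\lambda C_{k-1}^{\lambda+1}$ and $\|C_k^{\lambda+2}\|_{C[-1,1]}=C_k^{\lambda+2}(1)\asymp k^{2\lambda+3}=k^{d+1}$, each term contributes $k^{-r}\cdot k\cdot k^{d+1}=k^{d+2-r}$, so the sum over $k\le 2n$ is $\asymp n^4\Lambda_{d-1-r}(n)$ (equivalently, two applications of Markov's inequality cost $n^4$, not $n^2$). Your two errors happen to cancel to the displayed expression, but the derivation does not establish (\ref{finalbound}); a correct second-derivative route yields $n^4\Lambda_{d-1-r}(n)/N^2$, which is a different (and in the regime $N<n^2$ strictly weaker) bound. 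The paper's argument uses the first derivative only: $\|\zeta_{n,r}-L_{\mathbf t}(\zeta_{n,r})\|_\infty\le 2\omega(\zeta_{n,r},1/N)\le 2\|\zeta_{n,r}'\|_\infty/N$, then a single application of Markov's inequality $\|\zeta_{n,r}'\|_{C[-1,1]}\le(2n)^2\|\zeta_{n,r}\|_{C[-1,1]}$ combined with $\|\zeta_{n,r}\|_{C[-1,1]}\le\sum_{k=1}^{2n}k^{-r}N(k,d)\lesssim\Lambda_{d-1-r}(n)$, which gives exactly $n^2\Lambda_{d-1-r}(n)/N$ with no Gegenbauer differentiation identities needed. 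Replace your estimate with this chain and the proof closes.
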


\section{Comparison with Related Work}

In this section, we give a brief review of related work on rates of function approximation by neural networks.

The {\bf fully connected} shallow nets (\ref{ShallowNetworks}) or multi-layer nets (\ref{dcnn-1})
have nice approximation properties due to the fully connected nature, which
was well studied in a large literature around 30 years ago.
When the activation function is a $C^\infty$ sigmoidal type function, approximation rates were obtained
by many authors. In particular, in \cite{Barron} rates were given for functions in $f\in L_2({\RR}^d)$ whose Fourier transforms $\hat{f}$
satisfy a decay condition $\int_{\RR^d} |w| |\hat{f}(w)| d w<\infty$.
Another typical result based on localized Taylor expansions asserts \cite{Mhaskar1993} that even for shallow nets (\ref{ShallowNetworks}), we have
$\inf_{c_{k}, w_k, b_{k}} \left\|f_N - f\right\|_{C([-1, 1]^d)} =O(N^{-r/d})$ for $f \in W_\infty^r ([-1, 1]^d)$,
if for some $b\in{\mathbb R}$ and some integer $\ell \in\NN\setminus\{1\}$, the $C^\infty$ activation function
$\sigma$ satisfies $\sigma^{(k)} (b)\not= 0$ for all $k\in\ZZ_+$ and
$\lim_{u\to-\infty} \sigma(u)/|u|^\ell=0$ and $\lim_{u\to \infty} \sigma(u)/u^\ell=1$.
These conditions required by the localized Taylor expansion approach are not satisfied by ReLU,
so the approximation theory developed 30 years ago does not apply to ReLU.
The difficulty was overcome in the recent deep learning literature and approximation properties of ReLU nets
were established in \cite{Klusowski2018} for ReLU shallow nets and functions satisfying $\int_{\RR^d} |w| |\hat{f}(w)| d w<\infty$,
in \cite{Yarosky, Grohs, Petersen, Nakada2019} for deep nets and functions from $W_\infty^r ([-1, 1]^d)$ with $0< r \leq 2$,
and in \cite{Shaham} for approximation on manifolds.
These results are obtained for fully connected nets.

Deep CNNs are different from fully connected nets. They have special sparse convolutional connection matrices (\ref{Toeplitz}),
which leads to sparsity and reduces the computational complexity for structured data.
Recently in \cite{zhou2020universality}, for functions $f$ on $\Omega \subset[-1,1]^d$
satisfying $f = F\vert_\Omega$ with $F \in W^r_2\left(\mathbb{R}^d\right)$ and an integer index $r>2+d/2$,
it was shown that the approximation accuracy $\|f-\hat{f}\|_{\infty} \leq \epsilon$
can be achieved by a deep CNN of depth $4 \lceil \frac{1}{\epsilon^2} \log  \frac{1}{\epsilon^2}\rceil$ and at most
$\lceil \frac{75}{\epsilon^2} \log  \frac{1}{\epsilon^2}\rceil d$ free parameters.
The linear increment of the free parameter number with respect to $d$ improves
the bound in Theorem 1 of \cite{Yarosky} which requires at least $2^d \epsilon^{-d/r}$ free parameters
and $\frac{C_0 d}{4} (\log (1/\epsilon) + d)$ fully connected layers with $C_0 >0$ to achieve the same approximation accuracy $\epsilon$.
Periodized deep CNNs with different architectures and connection matrices different from the Toeplitz convolutional ones (\ref{Toeplitz}) were
shown in \cite{Oono2019, Petersen2020} to be able to realize the output layer of any fully-connected DNN with free parameters of the same order.
The same result was shown for deep CNNs (\ref{InitialLayers}) in \cite{zhou2020theory}.

The index $r>2+d/2$ required in \cite{zhou2020universality} can be very large for processing high dimensional data.
Hence approximated functions are required to possess high regularity which is not the usual case in applications. The problem happens because the approximation considered in \cite{zhou2020universality} is measured in the $L_\infty$ norm, while the Sobolev space $H^r(\mathbb{R}^d)$ requires derivatives of various orders to belong to the $L_2$ space.
This essentially causes the technical difficulty
 by embedding $W_2^r ([-1, 1]^d)$ into
$W^s_\infty ([-1, 1]^d)$ which requires $s<r-d/2$.
To overcome the difficulty, we consider the case when the data is from the unit sphere $\sph$.
 The restriction can be relaxed in this situation through applying spherical harmonic expansions to construct a near best approximation for functions $f \in W^{r}_\infty(\sph)$ in $L_\infty$ norm, while the Sobolev embedding theorem is only used in discretizing integrals.
In the literature, there have been some other harmonic analysis
approaches in dealing with approximation by fully connected neural networks, using ridgelet transforms in \cite{Sonoda},
local Taylor expansions in \cite{Yarosky}, and B-spline functions in \cite{Suzuki2019}.
Our spherical harmonic analysis approach makes full use of the inner product nature (\ref{rkhsform}) of the reproducing kernel of $\CH_n^d$
which, after discretizing the polynomial approximation $L_n (f)$ to $\widehat{L}^{\mathbf y}_{n,m}(f)$, enables us to
represent the linear transformations $\{\langle y_i, x \rangle\}$ by deep CNNs with linearly increasing widths, an idea
borrowed from our earlier work \cite{zhou2020universality}.
A key consequence of our approach is to allow the index $r$ here to be an arbitrarily small positive number,
which relaxes the restriction in \cite{zhou2020universality}
for the regularity of the approximated function. While the approximation of non-smooth functions is unified for $r>0$ and the same order
$\OO \left(\epsilon^{-2}\right)$ for the number of network free parameters to achieve an approximation accuracy $\epsilon>0$ is kept when $r>d-1$,
a parameter number of order $\OO \left(\epsilon^{-\frac{2d}{r}}\right)$ is required when $0< r <d-1$. This is due to our approach of
using a Hilbert space $W^s_2 (\sph)$ in our probabilistic estimate for the discretization,
which makes our rate suboptimal compared with \cite{Yarosky, Nakada2019, Schmidt-Hieber} for $r<d-1$.
It would be interesting to derive optimal rates of approximating by deep CNNs functions from $W^r_\infty (\sph)$ with small $r$.

On the other hand, as stated in Remark \ref{accuracyThm2}, for an additive ridge function on $\sph$ in the family (\ref{additiveridge}),
deep CNNs followed by a fully connected layer can extract linear features $\{y_j\}_{j=1}^m$ and then approximate the function efficiently,
with the same order of network free parameters as that for approximating a univariate function by fully connected DNNs.
This demonstrates the superiority of deep CNNs in approximating functions with structures.
It would be of great interest to explore other structures of multivariate functions for which deep CNNs
together with network architectures like pooling and parallel channels may have super performance in function approximations and representations.
Applying deep CNNs to some practical or empirical problems involving additive ridge functions (\ref{additiveridge}) would also help
understand advantages of convolutional structures of deep learning in some practical domains.

\section{Proof of the Main Results}\label{proof of main results}

This section is devoted to the proof of our main analysis. Our analysis for the error $f-\hat{f}$
is carried out by means of the bounds for $\|f-L_n(f)\|_\infty$ in \lemref{lemma:bestapproximationerror},
$\|L_n(f) - \widehat{L}_{n,m}^{\mathbf y}(f)\|_\infty$ in \lemref{lem:sampleversion}, and
$\|\widehat{L}^{\mathbf y}_{n,m}(f)- \hat f\|_{\infty}$ in \lemref{lem:final}.

\subsection{Proving the lemma on discretization}

To complete our analysis, we first prove \lemref{lem:sampleversion} and
\lemref{lem:final}.

The proof of \lemref{lem:sampleversion} is based on the following probability inequality for random variables with values in a Hilbert space
which can be found in \cite{smale2007learning}.

\begin{lem}\label{lemma:Hoeffding}
Let $(H, \|\cdot\|)$ be a Hilbert space and $\xi$ be a random variable on $\left(Y, \rho\right)$ with values in $H$. Assume $\left\|\xi\right\| \leq M < \infty$ almost surely. Denote $\sigma^2(\xi) = E\left(\left\|\xi\right\| ^2\right)$. Let $\left\{y_i\right\}_{i=1}^m$ be independent random drawers of $\rho$. Then for any $0< \delta <1$, we have with confidence $1-\delta$,
\begin{equation*}
\left\|\frac{1}{m} \sum_{i=1}^m \xi(y_i) -E (\xi)\right\|_H
\leq \frac{2M\log(\frac{2}{\delta})}{m} + \sqrt{ \frac{2\sigma^2(\xi)\log(\frac{2}{\delta})}{m} }.
\end{equation*}
\end{lem}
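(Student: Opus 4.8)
The plan is to read this as a Bernstein-type concentration inequality for a Hilbert-space-valued i.i.d.\ sum and to prove it by centering, a subexponential tail bound, and an inversion. First I would replace $\xi(y_i)$ by the centered variables $\eta_i := \xi(y_i) - E(\xi)$ (the Bochner mean $E(\xi)$ exists because $\|\xi\| \le M$). These are i.i.d.\ with $E(\eta_i)=0$, and the boundedness hypothesis gives $\|\eta_i\| \le \|\xi(y_i)\| + \|E(\xi)\| \le 2M$ almost surely, while $E\|\eta_i\|^2 = E\|\xi\|^2 - \|E(\xi)\|^2 \le \sigma^2(\xi)$, so the total variance satisfies $\sum_{i=1}^m E\|\eta_i\|^2 \le m\,\sigma^2(\xi)$. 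Writing $S_m := \sum_{i=1}^m \eta_i$, the quantity to be controlled is $\f1m\|S_m\|$, so it suffices to bound the tail of $\|S_m\|$ and rescale by $1/m$.

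The heart of the argument is the Hilbert-space Bernstein tail estimate
\[ P\bigl(\|S_m\| \ge t\bigr) \;\le\; 2\exp\!\Bigl(-\f{t^2}{2\bigl(m\,\sigma^2(\xi) + \f{2M}{3}\,t\bigr)}\Bigr), \qquad t>0. \]
I would establish it by the supermartingale method. The decisive structural fact is that a Hilbert norm is $2$-smooth, namely the exact identity $\|x+y\|^2 = \|x\|^2 + 2\la x,y\ra + \|y\|^2$ holds, which allows one to dominate the conditional exponential moments of $\|S_k\|$ along the partial sums and to build an exponential supermartingale; a maximal inequality then yields the displayed two-sided bound with the standard Bernstein constant $\f13$. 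This is precisely the ingredient behind the estimate cited from \cite{smale2007learning}, and it is the only place where the Hilbert (rather than general Banach) structure is genuinely used. I expect this to be the main obstacle, because the nonlinearity of $\|\cdot\|$ blocks the elementary scalar Chernoff computation and forces either the $2$-smoothness supermartingale above or, equivalently, an even-moment Rosenthal-type recursion for $E\|S_m\|^{2\ell}$ followed by Markov's inequality.

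Finally I would invert the tail bound. Setting its right-hand side equal to $\delta$ and abbreviating $L := \log(\f{2}{\delta})$ reduces the inequality to the quadratic $t^2 - \f{4LM}{3}\,t - 2Lm\,\sigma^2(\xi) = 0$, whose positive root is
\[ t \;=\; \f{2LM}{3} + \sqrt{\f{4L^2M^2}{9} + 2Lm\,\sigma^2(\xi)}. \]
By the subadditivity $\sqrt{a+b}\le \sqrt a + \sqrt b$ this is at most $\f{4LM}{3} + \sqrt{2Lm\,\sigma^2(\xi)}$; dividing by $m$ and bounding $\f43 \le 2$ then gives, with confidence $1-\delta$,
\[ \f1m\|S_m\| \;\le\; \f{2M\log(\f{2}{\delta})}{m} + \sqrt{\f{2\,\sigma^2(\xi)\log(\f{2}{\delta})}{m}}, \]
which is exactly the asserted inequality, in fact with the slightly sharper leading constant $\f43$ in the first term, leaving room to spare.
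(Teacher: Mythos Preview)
Your sketch is a correct outline of how this Bernstein-type inequality for Hilbert-space-valued sums is established: center, invoke the Pinelis--Bernstein tail bound via the $2$-smoothness supermartingale, and invert. The algebra of the inversion is accurate, and indeed yields the constant $\tfrac{4}{3}$ in the first term, which the stated lemma relaxes to $2$.

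However, there is nothing to compare against: the paper does not prove this lemma. It simply quotes it from \cite{smale2007learning} and uses it as a black box in the proof of \lemref{lem:sampleversion}. So the ``paper's own proof'' is a citation. If anything, your proposal goes further than the paper by actually indicating how the cited result is obtained; the route you describe (Pinelis's $2$-smooth supermartingale argument) is one of the standard derivations, and the alternative you mention (even-moment Rosenthal recursion plus Markov) is the other, which is closer in spirit to the original argument in \cite{smale2007learning}. Either suffices, and both deliver the same final inequality.
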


\begin{proof}[Proof of \lemref{lem:sampleversion}]
Recall that $L_n(f)$ is defined by \eqref{Ln_definition} with $f \in W^r_\infty(\sph)$ and $\tau>0$. In applying \lemref{lemma:Hoeffding} we take the Sobolev space $W_2^s(\sph)$
with the smoothness index $s=\tau+\f{d-1}2$ to be the Hilbert space $H$ and the random variable $\xi$ on $\left(\mathbb{S}^{d-1}, \s_d\right)$ with values in $H$ given by
$$\xi(y) = F_r(y) \sum_{k=0}^{2n} (1+\l_k)^{-r/2}\eta\left(\frac{k}{n}\right) Z_k (y, \cdot)\in H, \qquad  y\in \sph.$$
Then $E(\xi) = L_n (f)$ and $\frac{1}{m} \sum_{i=1}^m \xi(y_i)= \widehat{L}^{\mathbf y}_{n,m}(f)$.

To bound the norm $\|\xi\|=\|\xi\|_{W^s_2}$, we recall the norm of $W^s_2(\sph)$ given by \eqref{Sobolev_norm} with $p=2$ and find for $y\in \sph$,
$$
\left\| \xi(y) \right\|_{W^s_2(\sph)}  =  \left\|F_r(y)\sum_{k=0}^{2n}\left(1+\lambda_k\right)^{\frac{s-r}{2}}\eta\left(\frac{k}{n}\right) Z_k ( y, \cdot)\right\|_{L_2(\sph)},
$$
where $ \lambda_k = k(k+d-2)$. Then by the orthogonality and reproducing properties,
\begin{eqnarray*}
\left\| \xi(y) \right\|_{W^s_2(\sph)}^2
&=& \left(F_r(y)\right)^2  \sum_{k=0}^{2n}   \left(1+\lambda_k\right)^{s-r} \eta^2\left(\frac{k}{n}\right) Z_k(y,y)\\
&=& \left(F_r(y)\right)^2 \sum_{k=0}^{2n} \left(1+\lambda_k\right)^{s-r} \eta^2\left(\frac{k}{n}\right) N(k,d),
\end{eqnarray*}
where we have used the identity $Z_k(y,y) =N(k,d)$ found in \cite{dai2013approximation} as Corollary 1.2.7 and $N(k,d)$ is the dimension of spherical harmonics $\mathcal H^d_k$.
Notice that for $k\in\NN$, $k^2 < 1+\lambda_k \leq d k^2$. We find $\left(1+\lambda_k\right)^{s-r} \leq d^{\max\{s-r, 0\}} k^{2(s-r)}$ for either $s-r \geq 0$ or $s-r<0$.
Since $0\leq \eta(t)\leq 1$ for $t\in[0,2]$,
we can apply \eqref{H_dimension} to estimate the summation as
$$\sum_{k=0}^{2n} \left(1+\lambda_k\right)^{s-r} \eta^2\left(\frac{k}{n}\right) N(k,d) \leq 1 + \sum_{k=1}^{2n} d^{\max\{s-r, 0\}} k^{2(s-r)} c'_d k^{d-2} $$
with a constant $c'_d$ depending only on $d$, while
$$ \sum_{k=1}^{2n} k^{2(s-r)+d-2} \leq 1 + \left\{\begin{array}{ll}
\frac{3^{2(s-r)+d-1}}{2(s-r)+d-1} n^{2(s-r)+d-1}, & \hbox{if} \ 2(s-r)+d-2 >-1, \\
1+ \log (n+1), & \hbox{if} \ 2(s-r)+d-2 =-1, \\
\frac{1}{1-2(s-r)-d}, & \hbox{if} \ 2(s-r)+d-2 <-1.
\end{array}\right.
$$
Combining this with the definitions of the norm $\|f\|_{W^r_\infty(\sph)}$ and the function $\Lambda_\tau (n)$, we know that
$\left\| \xi(y) \right\|_{W^s_2}^2$ can be bounded as
\begin{equation*}
\left\| \xi(y) \right\|_{W^s_2(\sph)}^2
\leq c_{s, r, d}^2 \|f\|_{W^r_\infty(\sph)}^2 \Lambda_{2s-2r +d -1} (n),
\end{equation*}
where $c_{s, r, d}$ is a positive constant independent of $f$ or $n$. Thus the random variable $\xi$ satisfies
the condition $\left\|\xi\right\| \leq M < \infty$ in \lemref{lemma:Hoeffding} with $M= c_{s, r, d} \|f\|_{W^r_\infty(\sph)} \sqrt{\Lambda_{2s-2r +d -1} (n)}$.
So by \lemref{lemma:Hoeffding} with $\delta = \frac{1}{2}$ and $\sigma^2(\xi) \leq M^2$, we know from the positive measure of the sample set that there exists a set of points $\mathbf y=\left\{y_i\right\}_{i=1}^{m} \in \mathbb{S}^{d-1}$ such that
\begin{eqnarray*}
\left\|\frac{1}{m} \sum_{i=1}^m \xi(y_i) -E (\xi)\right\|_H &=& \left\|\widehat{L}_{n,m}^{\mathbf y}(f)-L_n(f)\right\|_{W_2^s(\sph)} \\
&\leq& \f{6c_{s, r, d} \|f\|_{W^r_\infty(\sph)} \sqrt{\Lambda_{2s-2r +d -1} (n)}}{\sqrt{m}}.
\end{eqnarray*}
This verifies \eqref{dcnn-4} by the embedding \propref{embedding} with $p=2$ and $s=\tau+\f{d-1}2 > \f{d-1}2$.
\end{proof}

\subsection{Proving the lemma on ridge approximation}

The proof of \lemref{lem:final} about approximating the function $\widehat{L}^{\mathbf y}_{n,m}(f)$
is conducted by approximating the ridge functions $l_n(\langle y_i, x \rangle)$ in (\ref{hatL}) with $y_i \in \mathbb{S}^{d-1} \subset \RR^d$ by deep CNNs.
A key idea in our analysis is to use the inner product form (\ref{rkhsform}) of the reproducing kernel of $\CH_n^d$
and then to apply convolutional factorizations to realize the generated linear features, which enables us to
conduct analysis after removing the restriction on large regularity index $r$.
This idea might be applied to some other learning theory problems \cite{FanHuWuZhou, GXGZ, LinZhou, ZhouDist}.

We first apply the following two lemmas proved in \cite{zhou2020universality} implying that the linear function $\langle y_i, x \rangle =y_i \cdot x$
can be realized by deep CNNs by factorizing $y_i$ regarded as a sequence into convolutions of filters supported in $\{0, 1, \ldots, S\}$.

\begin{lem}\label{convolutionalfactorization}
Let $S \geq 2$ and $W = \left(W_k\right)_{k=-\infty}^{\infty}$ be a sequence supported in $\left\{0, \cdots, \mathcal{M}\right\}$ with $\mathcal{M} \geq 0$.
Then there exists a finite sequence of filters $\left\{w^{(j)}\right\}_{j=1}^p$ each supported in $\left\{0,\cdots, S\right\}$ with $p\leq \lceil \frac{\mathcal{M}}{S-1}\rceil$ such that the following convolutional factorization holds true
\begin{equation*}
W = w^{(p)} \ast w^{(p-1)} \ast \cdots \ast w^{(2)} \ast w^{(1)}.
\end{equation*}
\end{lem}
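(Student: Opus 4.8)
The natural route is to pass from sequences to polynomials via the $z$-transform. To a sequence $V=(V_k)_{k\in\ZZ}$ supported in $\{0,\dots,D\}$ associate the real polynomial $\widehat V(z)=\sum_{k=0}^{D}V_k z^k$, of degree at most $D$; under this correspondence the convolution $w\ast v$ becomes exactly the product $\widehat w(z)\,\widehat v(z)$, and a filter supported in $\{0,\dots,S\}$ corresponds precisely to a real polynomial of degree at most $S$. Hence the statement is equivalent to the purely algebraic assertion that every real polynomial of degree at most $\mathcal{M}$ can be written as a product of at most $\lceil \mathcal{M}/(S-1)\rceil$ real polynomials, each of degree at most $S$. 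The degenerate cases are immediate: if $W\equiv 0$ or $\widehat W$ is a nonzero constant one takes $p=1$ with $w^{(1)}=W$ (supported in $\{0\}\subseteq\{0,\dots,S\}$), so we may assume $\deg\widehat W\ge 1$.

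The engine is the factorization of a real polynomial into real irreducible factors of degree $1$ and $2$ (the fundamental theorem of algebra together with closure under conjugation, complex conjugate roots being grouped into real quadratics). The plan is then strong induction on $\mathcal{M}$. If $\deg\widehat W\le S$, take $p=1$ and $w^{(1)}$ the coefficient sequence of $\widehat W$; since $\mathcal{M}\ge 1$ and $S\ge 2$ we have $\lceil \mathcal{M}/(S-1)\rceil\ge 1$. If $\deg\widehat W\ge S+1$, I would build a real divisor $g\mid\widehat W$ of degree exactly $S-1$ or $S$ by starting from the leading constant and appending irreducible factors one at a time: each step raises the running degree by $1$ or $2$, so the running total, which must eventually pass $S-1$ since $\deg\widehat W\ge S+1$, first lands inside the window $\{S-1,S\}$ (it cannot overshoot $S$ in a single step of size at most $2$ starting from a value $\le S-2$). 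Writing $\widehat W=g\cdot Q$, one has $\deg g\le S$, $\deg Q=\deg\widehat W-\deg g\le \mathcal{M}-(S-1)$, and $\deg Q\ge(S+1)-S=1$, so the inductive hypothesis applied to $Q$ with support bound $\mathcal{M}-(S-1)$ produces a factorization of $Q$ into at most $\lceil(\mathcal{M}-(S-1))/(S-1)\rceil=\lceil \mathcal{M}/(S-1)\rceil-1$ filters; appending $g$ gives $p\le\lceil \mathcal{M}/(S-1)\rceil$, and transporting everything back through the $z$-transform yields the filters $\{w^{(j)}\}_{j=1}^{p}$.

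The only genuine obstacle is the existence of the degree-$(S-1)$-or-$S$ real divisor $g$, and this is exactly the place the hypothesis $S\ge 2$ is indispensable: every real irreducible factor has degree $1$ or $2\le S$, so the greedy accumulation above is possible; for $S=1$ the statement fails, as $(z^2+1)^k$ admits no proper real factor of degree $\le 1$. The remaining points are routine bookkeeping: that $g$ can be taken real (pair up the conjugate irreducible quadratics, each of which already fits since its degree is $2\le S$), and that the support dictionary survives the $z$-transform (a real polynomial of degree $D$ has coefficient sequence supported in $\{0,\dots,D\}$, whether or not its constant or leading coefficient vanishes), so the budget reduction from $\mathcal{M}$ to $\mathcal{M}-(S-1)$ is legitimate and the induction closes.
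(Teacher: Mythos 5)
Your proof is correct, and it is essentially the argument the paper relies on: the lemma is quoted from \cite{zhou2020universality}, whose proof likewise identifies a filter with the polynomial $\sum_k W_k z^k$, invokes the fundamental theorem of algebra to split it into real factors of degree at most $2$, and regroups these greedily into blocks of degree between $S-1$ and $S$ to get the count $\lceil \mathcal{M}/(S-1)\rceil$. Your only deviations are cosmetic (phrasing the regrouping as a strong induction on the support length and spelling out the ceiling bookkeeping), and your observation that $S\ge 2$ is exactly what makes the degree-$2$ irreducible factors fit is the right way to see why the hypothesis is needed.
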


\begin{lem}\label{productofmatrix}
Let $\{w^{(k)}\}_{k=1}^J$ be a set of sequences supported in $\{0,1,\ldots, S\}$. Then
\begin{equation}
\begin{aligned}
T^{(J)} \cdots  T^{(2)}  T^{(1)} = T^{\left(J,1\right)} :=\left(W_{i-k}\right)_{i=1, \dots, d+JS, k=1, \dots, d} \in \RR^{(d+JS) \times d}
\end{aligned}
\end{equation}
is a Toeplitz matrix associated with the filter $W=w^{(J)} \ast \cdots \ast w^{(2)} \ast w^{(1)}$ supported in $\left\{0, 1, \cdots, JS\right\}$.
\end{lem}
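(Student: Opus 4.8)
The plan is to argue by induction on the number of factors $J$, reducing the whole statement to the two-factor identity that the product of two Toeplitz convolutional matrices is again a Toeplitz convolutional matrix, namely the one associated with the convolution of the two underlying filters. For the base case $J=1$ the assertion is just the defining identity $T^{(1)}=T^{w^{(1)}}$. For the inductive step, I would suppose that $T^{(J-1)}\cdots T^{(1)}$ equals the Toeplitz matrix $T^{W'}$ of the filter $W'=w^{(J-1)}\ast\cdots\ast w^{(1)}$, which by repeated use of the elementary fact ``the convolution of a sequence supported in $\{0,\dots,S\}$ with one supported in $\{0,\dots,M\}$ is supported in $\{0,\dots,S+M\}$'' is supported in $\{0,\dots,(J-1)S\}$ and has matrix dimensions $(d+(J-1)S)\times d$. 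It then remains to show that $T^{w^{(J)}}T^{W'}=T^{w^{(J)}\ast W'}$ and that $W=w^{(J)}\ast W'$ is supported in $\{0,\dots,JS\}$.

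The heart of the argument is a direct computation of the entries of the matrix product. Writing $w=w^{(J)}$, supported in $\{0,\dots,S\}$, and recalling that the Toeplitz entries are $(T^{W'})_{j,k}=W'_{j-k}$ and $(T^{w})_{i,j}=w_{i-j}$, the $(i,k)$-entry of the product is
\begin{equation*}
\sum_{j} w_{i-j}\,W'_{j-k}=\sum_{p} w_{(i-k)-p}\,W'_{p}=(w\ast W')_{i-k},
\end{equation*}
where the substitution $p=j-k$ has been made and the definition of convolution $(w\ast v)_i=\sum_k w_{i-k}v_k$ invoked. The only point requiring care is that the summation index $j$ in the matrix product ranges over the finite set $\{1,\dots,d+(J-1)S\}$ rather than over all of $\ZZ$; after the substitution this becomes $p\in\{1-k,\dots,d+(J-1)S-k\}$. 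Because the column index satisfies $1\le k\le d$, the lower bound gives $1-k\le 0$ and the upper bound gives $d+(J-1)S-k\ge (J-1)S$, so this range always contains the full support $\{0,\dots,(J-1)S\}$ of $W'$. Hence no nonzero term of $W'$ is truncated and the finite matrix-product sum coincides with the full convolution.

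This identifies the product with the Toeplitz matrix $T^{W}$, and since $W$ is supported in $\{0,\dots,JS\}$ its entries $W_{i-k}$ are indexed by $i\in\{1,\dots,d+JS\}$, $k\in\{1,\dots,d\}$, which is exactly $T^{(J,1)}$; tracking the row dimension, multiplication by the $(d+JS)\times(d+(J-1)S)$ matrix $T^{w^{(J)}}$ raises the output length from $d+(J-1)S$ to $d+JS$ while leaving the column width $d$ unchanged, completing the induction.

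The substantive content here is simply the associativity of convolution, already encoded in the entry formula above, together with the additivity of supports under convolution; accordingly I do not expect a genuine conceptual obstacle, only the need to be scrupulous about boundary indices. The one place where a slip could occur is precisely the support/truncation check: one must confirm that for every admissible column $k$ the finite sum in the matrix product captures all nonzero contributions of $W'$, which is exactly what the constraint $k\le d$---equivalently, the fact that the column width stays fixed at $d$ throughout the tower of matrices---guarantees. Verifying the support bound $\{0,\dots,JS\}$ for $W$ and the matching row dimension $d+JS$ then closes the argument.
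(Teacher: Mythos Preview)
Your inductive argument is correct: the key entrywise computation with the substitution $p=j-k$ and the support check showing that the finite summation range contains all of $\{0,\dots,(J-1)S\}$ is exactly what is needed, and the dimension bookkeeping is right. Note, however, that the paper does not prove this lemma in the text at all; it simply quotes it from \cite{zhou2020universality}. Your proof is the standard one and almost certainly matches what appears there, so there is nothing to compare beyond observing that you have supplied the omitted details.
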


We then construct a fully connected layer to approximate the univariate function $l_n$ by continuous piecewise linear functions (splines)
spanned by $\{\s(\cdot-t_i)\}^N_{i=1}$ with $t_i = -1+\f{i-2}N$,
based on the following well known result in approximation by splines which can be found in \cite{DevoreLorentz} and \cite[Lemma 6]{zhou2018deepdistributed}.

\begin{lem}\label{spline}
Given an integer $N$, let $\textbf{t}=\{t_i\}_{i=1}^{2N+3}$ be the uniform mesh on $\left[-1-\frac{1}{N}, 1+\frac{1}{N}\right]$ with $t_i=-1+\f{i-2}N$.
 Construct a linear operator $L_{\textbf{t}}$ on $C[-1,1]$ by
\[L_{\textbf t}(f)(u)=\sum_{i=2}^{2N+2} f(t_i)\d_i(u), \quad u\in [-1,1], \, f\in C[-1,1],\]
where $\d_i\in C(\RR)$, $i=2,\ldots, 2N+2$, is given by
\begin{equation}\label{delta}\delta_{i}(u)=N(\sigma\left(u-t_{i-1}\right)-2\sigma\left(u-t_{i}\right)+ \sigma\left(u-t_{i+1}\right)).
\end{equation}
Then for $g\in C[-1,1]$, $\left\|L_{\mathbf{t}}(g)\right\|_{C\left[-1,1\right]} \leq \left\|g\right\|_{C\left[-1,1\right]}$ and
$$\left\|L_{\mathbf{t}}(g)-g\right\|_{C\left[-1,1\right]} \leq 2 \omega\left(g, 1/N\right)$$
where $\omega(g, \mu)$ is the modulus of continuity of $g$ given by
\[\omega(g, \mu)=\sup_{|t|\leq \mu} \Bl\{|g(v)-g(v+t)|: v, v+t \in[-1,1]\Br\}.\]
\end{lem}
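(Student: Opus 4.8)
The plan is to recognize each $\delta_i$ as the standard continuous piecewise-linear hat (tent) function associated with the node $t_i$, after which both claimed bounds follow from two structural features of such a basis: nonnegativity together with a partition-of-unity identity, and the localization of each basis function to a neighborhood of width $1/N$ about its node. Since $\sigma$ is the ReLU and the nodes are equally spaced with gap $t_{i+1}-t_i = 1/N$, I would first compute that the scaled second difference $N(\sigma(u-t_{i-1}) - 2\sigma(u-t_i) + \sigma(u-t_{i+1}))$ vanishes for $u \leq t_{i-1}$, rises linearly to the value $1$ at $u=t_i$, descends linearly back to $0$ at $u = t_{i+1}$, and stays $0$ for $u \geq t_{i+1}$. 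This shows that $\delta_i$ is supported on $[t_{i-1},t_{i+1}]$, satisfies $0 \leq \delta_i \leq 1$, and has the interpolation property $\delta_i(t_j)=\delta_{ij}$; in particular $L_{\mathbf t}(g)$ interpolates $g$ at the nodes $t_2,\dots,t_{2N+2}$ lying in $[-1,1]$.

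Next I would establish the partition of unity on $[-1,1]=[t_2,t_{2N+2}]$ by telescoping the ReLU terms. The coefficient of $\sigma(u-t_j)$ collapses for every interior index, leaving
\begin{equation*}
\sum_{i=2}^{2N+2}\delta_i(u) = N\bigl(\sigma(u-t_1) - \sigma(u-t_2) - \sigma(u-t_{2N+2}) + \sigma(u-t_{2N+3})\bigr).
\end{equation*}
For $u\in[t_2,t_{2N+2}]$ the last two terms vanish while the first two equal $u-t_1$ and $u-t_2$, so the bracket reduces to $t_2-t_1 = 1/N$ and the whole sum equals $1$. I expect this to be the step requiring the most care, since it is precisely where the boundary nodes $t_1,t_2,t_{2N+2},t_{2N+3}$ enter, and one must verify that only the basis functions indexed $2,\dots,2N+2$ are needed to cover the closed interval $[-1,1]$.

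With these two facts in hand, both estimates are immediate. For the stability bound, nonnegativity of $\delta_i$ and the partition of unity give, for every $u\in[-1,1]$,
\begin{equation*}
|L_{\mathbf t}(g)(u)| \leq \sum_{i=2}^{2N+2} |g(t_i)|\,\delta_i(u) \leq \|g\|_{C[-1,1]} \sum_{i=2}^{2N+2}\delta_i(u) = \|g\|_{C[-1,1]}.
\end{equation*}
For the approximation bound, I would use the partition of unity to write $g(u)=\sum_i g(u)\delta_i(u)$, so that $L_{\mathbf t}(g)(u) - g(u) = \sum_i (g(t_i)-g(u))\delta_i(u)$. Fixing $u$ in a subinterval $[t_k,t_{k+1}]$, only $\delta_k$ and $\delta_{k+1}$ are nonzero there, and for these two indices $|u-t_i| \leq 1/N$, whence $|g(t_i)-g(u)| \leq \omega(g,1/N)$. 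This yields
\begin{equation*}
|L_{\mathbf t}(g)(u) - g(u)| \leq \omega(g,1/N)\bigl(\delta_k(u)+\delta_{k+1}(u)\bigr) = \omega(g,1/N) \leq 2\,\omega(g,1/N),
\end{equation*}
and taking the supremum over $u\in[-1,1]$ gives the claimed bound, in fact with a factor of $2$ to spare.
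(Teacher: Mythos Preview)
Your proof is correct. The paper does not actually prove this lemma; it states the result as ``well known'' and refers the reader to DeVore--Lorentz and to Lemma~6 of \cite{zhou2018deepdistributed}. What you have written is precisely the standard argument one finds in those references: identify each $\delta_i$ as the hat function on $[t_{i-1},t_{i+1}]$ via a direct computation with the second difference of ReLU, verify the partition of unity $\sum_i\delta_i\equiv 1$ on $[-1,1]$ by telescoping, and then use nonnegativity plus the partition of unity to get both the stability bound and the local Lebesgue-type error estimate. Your observation that the argument in fact yields $\|L_{\mathbf t}(g)-g\|_{C[-1,1]}\le \omega(g,1/N)$ without the factor $2$ is also correct; the stated constant $2$ is simply not sharp.
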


For the convenience of counting free parameter numbers, we introduce a linear operator ${\mathcal L}_N: \RR^{2N+1} \to \RR^{2N+3}$ given for $\zeta =(\zeta_i)_{i=1}^{2N+1} \in \RR^{2N+1}$ by
\begin{equation}\label{diffoperator}
\left({\mathcal L}_N (\zeta)\right)_i =
\begin{cases}
\zeta_2,                            &\text{for}~i=1,\\
\zeta_3 -2\zeta_2,                &\text{for}~i=2,\\
\zeta_{i-1} -2\zeta_i +\zeta_{i+1},   &\text{for}~3\leq i \leq 2N+1,\\
\zeta_{2N+1} -2\zeta_{2N+2},        &\text{for}~i=2N+2,\\
\zeta_{2N+2},                       &\text{for}~i=2N+3.
\end{cases}
\end{equation}
An important property of the operator ${\mathcal L}_N$ is to express the approximation operator $L_{\textbf{t}}$ on $C[-1,1]$ in terms of $\{\sigma\left(\cdot-t_{j}\right)\}_{j=1}^{2N+3}$ as
\begin{equation}\label{splineiden}
L_{\textbf t}(f)=N \sum_{i=1}^{2N+3} \left({\mathcal L}_N \left(\left\{f(t_k)\right\}_{k=2}^{2N+2}\right)\right)_i \sigma\left(\cdot-t_{i}\right), \quad \forall f\in C[-1,1].
\end{equation}

\begin{proof}[Proof of \lemref{lem:final}]
 For $m\in\NN$ and $\mathbf y=\{y_1,\ldots, y_m\}\subset \sph$,
 we take $W$ to be a sequence supported in $\left\{0,\cdots,md-1\right\}$ given by $W_{(j-1)d+(d-i)} = (y_{j})_i$
 where $j\in \left\{1,\cdots,m\right\}$ and $i \in \left\{1,\cdots,d\right\}$.
 By \lemref{convolutionalfactorization} with $\mathcal{M} = md-1$, there exists a sequence of filters $\textbf{w} = \left\{w^{(j)}\right\}_{j=1}^{J}$
 supported in $\left\{0,\cdots, S\right\}$ with $J \geq \lceil \frac{\mathcal{M}}{S-1}\rceil$ satisfying the convolutional factorization
 $W = w^{(J)} \ast w^{(J-1)} \ast \cdots \ast w^{(2)} \ast w^{(1)}$. Here for $j=p+1, \ldots, J$, we have taken $w^{(j)}$ to be the delta sequence $\delta_0$ given by $\left(\delta_0\right)_0 =1$ and $\left(\delta_0\right)_k =0$ for $k\in\ZZ\setminus \{0\}$. By \lemref{productofmatrix}, we have
$$ T^{(J)} T^{(J-1)}\cdots T^{(1)} = T^{\left(J,1\right)} =\left(W_{i-k}\right)_{i=1, \dots, d+JS, k=1, \dots, d} \in \RR^{(d+JS) \times d}, $$
where $T^{(j)}$ is the Toeplitz matrix with filter $w^{(j)}$ for $j=1,2,\ldots, J$.

Now we construct bias vectors in the neural networks.
We denote $\left\|w\right\|_1 = \sum_{k=-\infty}^{\infty}|w_k|$. Take $b^{(1)} = - \left\|w^{(1)}\right\|_1 \textbf{1}_{d_0}$ and
\begin{equation}\label{biasvector}
\begin{aligned}
b^{(j)} =  \left( \Pi^{j-1}_{p=1} \left\|w^{(p)}\right\|_1\right) T^{(j)} \textbf{1}_{d_{j-1}}
-  \left( \Pi^{j}_{p=1} \left\|w^{(p)}\right\|_1\right) \textbf{1}_{d_{j-1}+S},
\end{aligned}
\end{equation}
for $j=2,\cdots,J$. The bias vectors satisfy $b^{(j)}_{S+1} = \ldots = b^{(j)}_{d_j -S}$. Observe that $\left\| x\right\|_{\infty} \leq 1$ for $x\in\sph$.
Denote $\|h\|_\infty =\max\{\|h_j\|_\infty: j=1, \ldots, q\}$ for a vector of functions $h: \sph \to \RR^q$. We know that for $h: \sph \to \RR^{d_{j-1}}$,
 $$\left\|T^{(j)}h\right\|_\infty \leq \left\|w^{(j)}\right\|_1 \left\|h\right\|_\infty. $$
Hence the components of $h^{(J)}(x)$ satisfy
$$
\left(h^{(J)}(x)\right)_{kd}=\left\langle y_k, x \right\rangle + B^{(J)}, \qquad k=1, \ldots, m, $$
where $B^{(J)} = \Pi^{J}_{p=1} \left\|w^{(p)}\right\|_1$. Applying the downsampling operator \eqref{downsample} leads to
\[\mathfrak{D}_d \left(h^{(J)}(x)\right)=
\begin{bmatrix}
\left\langle y_1, x \right\rangle\\
\vdots\\
\langle y_m,  x \rangle\\
0\\
\vdots\\
0
\end{bmatrix}
+ B^{(J)}\textbf{1}_{\lfloor (d+JS)/d\rfloor}.\]

Denote $\widehat{d} = \lfloor (d+JS)/d\rfloor$. Since $J \geq \lceil \frac{md-1}{S-1}\rceil$, we have
$$\frac{d+JS}{d} \geq 1 + \frac{md -1}{d} \frac{S}{S-1} >  1 + \frac{md -1}{d} \geq m. $$
Hence $\widehat{d} \geq m$.

We turn to expressing the last two fully connected layers. Of them, $h^{(J+1)}$ is given by
\[h^{(J+1)}(x)=\s(F^{(J+1)}\mathfrak{D}_d(h^{(J)}(x))-b^{(J+1)})\]
with the connection matrix $F^{(J+1)} =\Xi_{\mathcal{D}_2, \textbf{1}_{2N+3}}$ stated in (\ref{fullmatrices})
and the bias vector
\begin{equation}\label{bJplus1}
b^{(J+1)}_{(j-1)(2N+3) +i}=
\begin{cases}
B^{(J)} + t_{i}, ~~~&\text{if}~j=1, \ldots, m, \ i=1, \ldots, 2N+3, \\
B^{(J)} + 1, ~~~&\text{if}~j >m,
\end{cases}
\end{equation}
where $\textbf{t} := \left\{t_1 < \cdots < t_{2N+3}\right\}$ is given in \lemref{spline}.  Note that $F^{(J+1)}$ is a determined matrix without free parameters.
Then the first fully-connected layer $h^{(J+1)}(x)\in\RR^{\hat d(2N+3)}$ of the deep network is
\begin{equation}\label{firstfull}
\left( h^{(J+1)} \right)_{(j-1)(2N+3) +i} =\left\{
                    \begin{array}{ll}
                      \sigma \left( \langle y_j, \cdot \rangle - t_i\right), & \hbox{if}~j\leq m, \ 1 \leq i \leq 2N+3, \\
                      0, ~~~&\text{if}~j >m.
                    \end{array}
                  \right.
\end{equation}

Write $h^{(J+1)}(x)\in\RR^{\hat d(2N+3)}$ in a block form with $\hat d$ blocks of equal size $2N+3$, then the $j$-th block is
$\left[\sigma \left( \langle y_j, x \rangle - t_i\right)\right]_{i=1}^{2N+3}$ for $j=1, \ldots, m$, while the other blocks are zero vectors.

Take the vector $\Theta_N \in\RR^{2N+3}$ in the connection matrix $F^{(J+2)} = \Xi_{\mathcal{D}_2, \Theta_N}^T$ of the second fully-connected layer stated in
(\ref{fullmatrices}) in terms of the linear operator ${\mathcal L}_N$ as
$$ \Theta_N = {\mathcal L}_N \left(\left\{\zeta_{n,r}(t_i)\right\}_{i=2}^{2N+2}\right), $$
then by the identity (\ref{splineiden}),
for $j=1, \ldots, m$, the $j$th entry of the product $F^{(J+2)}h^{(J+1)}(x)$ equals
\begin{eqnarray*}
\T_N^T \left[\sigma \left( \langle y_j, x \rangle - t_i\right)\right]_{i=1}^{2N+3}
&=&\sum_{i=1}^{2N+3} \left({\mathcal L}_N \left(\left\{\zeta_{n,r}(t_i)\right\}_{i=2}^{2N+2}\right)\right)_i \sigma \left( \langle y_j, x \rangle - t_i\right) \\
&=& \frac{1}{N}L_{\textbf t}\left(\zeta_{n,r}\right)\left(\langle y_j, x \rangle\right).
\end{eqnarray*}
The other entries of the product $F^{(J+2)}h^{(J+1)}(x)$ vanish. Thus, by taking $B^{(J+2)}=\|\zeta_{n,r}\|_{C[-1,1]}$ and
\begin{equation}\label{bJplus2}
b^{(J+2)}= \left[\begin{array}{c}
- \frac{B^{(J+2)}}{N} {\bf 1}_{m} \\
O
\end{array}\right],
\end{equation}
we see from the homogenous property $\sigma(u/N) =\sigma(u)/N$ that the last layer $h^{(J+2)}$ is given by
\[ h^{(J+2)}(x)=\frac{1}{N}\left[\begin{array}{c}
\left[L_{\textbf t}\left(\zeta_{n,r}\right)\left(\langle y_j, x \rangle\right) + B^{(J+2)}\right]_{j=1}^m \\
O
\end{array}\right]. \]
For the coefficients we choose $c^{(J+2)}\in \RR^{\hat d}$ as
\[c^{(J+2)}_j=\left\{
                \begin{array}{ll}
                 \frac{N}{m} F_r(y_j), & \hbox{if $j=1,\ldots,m$,} \\
                  0, & \hbox{otherwise}
                \end{array}
              \right.\]
and $ A=B^{(J+2)}\frac{1}{m}  \sum_{j=1}^m F_r(y_j)$.
Then we have that for $x\in\sph$,
\begin{align}
&\left|\widehat{L}^{\mathbf y}_{n,m}(f)(x) - c^{(J+2)}\cdot h^{(J+2)}(x)-A\right| \nonumber \\
=&\left|\f 1 m\sum_{j=1}^m F_r(y_j) \zeta_{n,r}(\la y_j,x\ra)-\f 1 m\sum_{j=1}^m F_{r}(y_j)L_{\textbf t}\left(\zeta_{n,r}\right)\left(\langle y_j, x \rangle\right)\right| \nonumber\\
\leq &\|f\|_{W^{r}_\infty(\sph)} \|\zeta_{n,r}-L_{\mathbf t}(\zeta_{n,r})\|_{C[-1,1]}. \label{dcnn-3}
\end{align}
Since $\zeta_{n,r}$ is an algebraic polynomial of degree at most $2n$, by Markov's inequality,
\[\|\zeta_{n,r}'\|_{C[-1,1]}\leq (2n)^2\|\zeta_{n,r}\|_{C[-1,1]}. \]
Combining this with the bound $\|\zeta_{n,r}\|_{C[-1,1]}\leq\sum_{k=1}^{2n}k^{-r} N(k,d)$ followed from Corollary 1.2.7 of \cite{dai2013approximation}, we know that
 \[\|\zeta'_{n,r}\|_{C([-1,1])}\leq c_d  n^2 \sum_{k=1}^{2n} k^{d-2-r},
 \]
where $c_d$ is a constant depending only on $d$. But
$$\sum_{k=1}^{2n} k^{d-2-r} \leq 1 + \left\{\begin{array}{ll}
\frac{3^{d-1-r}}{d-1-r} n^{d-1-r}, & \hbox{if} \ d -2 -r >-1, \\
1 + \log (n+1), & \hbox{if} \ d -2 -r =-1, \\
\frac{1}{r+1-d}, & \hbox{if} \ d -2 -r <-1,
\end{array}\right.$$
which is bounded by $c''_{r, d} \Lambda_{d-1-r} (n)$ with a positive constant $c''_{r, d}$ depending only on $r$ and $d$.
It follows that $\omega(\zeta_{n,r},\f 1 N)\leq  c_d c''_{r, d} n^2 \Lambda_{d-1-r} (n)/N$.
 Combining this with \eqref{dcnn-3}, \lemref{spline} and the embedding \propref{embedding} yields
\begin{align*}\label{dcnn-5}
\left\|\widehat{L}^{\mathbf y}_{n,m}(f)(x) - c^{(J+2)}\cdot h^{(J+2)}(x)-A\right\|_\infty\leq  c'_{r, d} \f{n^2 \Lambda_{d-1-r} (n)}{N}\|f\|_{W^r_\infty(\sph)},
\end{align*}
where $c'_{r, d}$ is a constant depending only on $r$ and $d$.

The total number of free parameters $\mathcal{N}$ in our network is the sum of $J(S+1)$ contributed by $\textbf{w}$,
$J(2S+1)$ by the bias vectors in the first $J$ layers, $2N+1$ contributed by the vector $\left\{\zeta_{n,r}(t_i)\right\}_{i=2}^{2N+2}$ in choosing $\Theta_N$, $2$ by the parameters $B^{(J)}$, $B^{(J+2)}$ in the fully-connected layers, and at most $m+1$ by $c^{(J+2)}$ and $A$. So it can be bounded as
$$\mathcal{N} \leq J(S+1)+J(2S+1)+2N+1+2+m+1 \leq J (3S+2)+m+2N+4.$$
This proves \lemref{lem:final}.
\end{proof}

\subsection{General error bounds}

With the proved bounds for $\|f-L_n(f)\|_\infty$ in \lemref{lemma:bestapproximationerror},
$\|L_n(f) - \widehat{L}_{n,m}^{\mathbf y}(f)\|_\infty$ in \lemref{lem:sampleversion}, and
$\|\widehat{L}^{\mathbf y}_{n,m}(f)- \hat f\|_{\infty}$ in \lemref{lem:final},
the following bounds for the error $f-\hat{f}$ follows immediately.

\begin{thm}\label{thm:general}
Let $2 \leq S \leq d$, $d \geq 3$, $r>0$, $\tau>0$, $m, n, N\in\NN$ and $f \in W^r_\infty(\mathbb{S}^{d-1})$.
Let $J\geq \lceil \frac{md-1}{S-1}\rceil$, $\mathcal{D}_1=(2N+3)\lfloor (d+JS)/d\rfloor$ and $\mathcal{D}_2=\lfloor (d+JS)/d\rfloor$.
Then for the network constructed in \lemref{lem:final} there exists a function $\hat f\in \mathfrak{H}_{J,\mathcal{D}_1,\mathcal D_2,S}$
such that
\begin{equation}\label{generalbound}
\left\|f-  \hat f\right\|_{\infty}
\leq C'_{r, d, \tau} \left(n^{-r} +
 \f{\sqrt{\Lambda_{2(d -1-r+ \tau)} (n)}}{\sqrt{m}}  +
\f{n^2 \Lambda_{d-1-r} (n)}{N}\right)\|f\|_{W^r_\infty(\sph)},
\end{equation}
where $C'_{r, d, \tau}$ is a constant depending only on $r, d, \tau$. Moreover,
the total number of free parameters $\mathcal{N}$ in the network can be bounded as
$$\mathcal{N} \leq J (3S+2)+ m+2N+4.$$
\end{thm}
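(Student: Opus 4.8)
The plan is to assemble Theorem~\ref{thm:general} directly from the three lemmas already established, using the triangle inequality as the single organizing idea. Fix $f\in W^r_\infty(\sph)$ and integers $m,n,N$ with $J\geq\lceil\frac{md-1}{S-1}\rceil$. First I would invoke \lemref{lemma:bestapproximationerror} to bound $\|f-L_n(f)\|_\infty\leq c\,n^{-r}\|f\|_{W^r_\infty(\sph)}$. Next, \lemref{lem:sampleversion} supplies a sample $\mathbf y=\{y_1,\ldots,y_m\}\subset\sph$ such that $\|L_n(f)-\widehat L^{\mathbf y}_{n,m}(f)\|_\infty\leq C_{r,d,\tau}\,m^{-1/2}\sqrt{\Lambda_{2(d-1-r+\tau)}(n)}\,\|f\|_{W^r_\infty(\sph)}$. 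The point to be careful about is that this same $\mathbf y$ must be fed into \lemref{lem:final}: since that lemma holds for \emph{any} $\mathbf y\subset\sph$, there is no conflict, and it produces a network of the stated architecture (with widths $\mathcal D_1=(2N+3)\lfloor(d+JS)/d\rfloor$, $\mathcal D_2=\lfloor(d+JS)/d\rfloor$, connection matrices \eqref{fullmatrices}, and bias vectors \eqref{bJplus1}, \eqref{bJplus2}) together with a function $\hat f\in\mathfrak H_{J,\mathcal D_1,\mathcal D_2,S}$ satisfying $\|\widehat L^{\mathbf y}_{n,m}(f)-\hat f\|_\infty\leq c'_{r,d}\,\frac{n^2\Lambda_{d-1-r}(n)}{N}\|f\|_{W^r_\infty(\sph)}$.

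Then the triangle inequality
$$\|f-\hat f\|_\infty\leq\|f-L_n(f)\|_\infty+\|L_n(f)-\widehat L^{\mathbf y}_{n,m}(f)\|_\infty+\|\widehat L^{\mathbf y}_{n,m}(f)-\hat f\|_\infty$$
combines the three bounds; setting $C'_{r,d,\tau}=\max\{c,C_{r,d,\tau},c'_{r,d}\}$ (or simply their sum) yields \eqref{generalbound}. Finally, the free-parameter count $\mathcal N\leq J(3S+2)+m+2N+4$ is exactly the bound proved in \lemref{lem:final}, since the network here is the one constructed there and no further parameters are introduced; so this part is just a citation.

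There is essentially no obstacle here — this theorem is a bookkeeping corollary of the three lemmas, and the only thing that requires a moment's thought is ensuring the probabilistically-chosen sample $\mathbf y$ from \lemref{lem:sampleversion} is consistently used throughout, which is automatic because \lemref{lem:final} is stated uniformly over $\mathbf y$. The genuine mathematical content lives in \lemref{lem:sampleversion} (the Hilbert-space Hoeffding argument with the fractional Laplace--Beltrami power) and \lemref{lem:final} (the convolutional factorization plus spline construction), both of which are proved earlier; the remaining work in deriving Theorem~\ref{theorem:mainresult1} from Theorem~\ref{thm:general} will be the optimization over $n,m,N$ subject to the free-parameter budget, but that is a separate step and not part of the present statement.
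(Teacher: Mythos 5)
Your proposal is correct and matches the paper's approach exactly: the paper itself presents Theorem~\ref{thm:general} as an immediate consequence of \lemref{lemma:bestapproximationerror}, \lemref{lem:sampleversion}, and \lemref{lem:final} via the triangle inequality, with the parameter count inherited from \lemref{lem:final}. Your remark that the sample $\mathbf y$ from \lemref{lem:sampleversion} can be fed into \lemref{lem:final} because the latter holds for arbitrary $\mathbf y\subset\sph$ is the right (and only) point needing care.
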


\subsection{Proving the main results}

We are in a position to derive our main results from the general error bounds in \thmref{thm:general}.

\begin{proof}[Proof of \thmref{theorem:mainresult1}]
Since $J \geq \frac{d-1}{S-1}$, we know that $\frac{(S-1)J +1}{d} \geq 1$.
Take $m= \lfloor \frac{(S-1)J +1}{d}\rfloor$.
Then $m\in\NN$ and $md-1 \leq (S-1)J$. Hence the requirement $J\geq \lceil \frac{md-1}{S-1}\rceil$ in \thmref{thm:general} is valid.

Now we take $n, N$ as
$$ \left\{\begin{array}{ll}
n= \lfloor m^{\frac{1}{2(d-1+\tau)}} \rfloor \ \hbox{and} \ N=n^{d+1}, & \hbox{if} \ 0<r< d-1, \\
n= \lfloor m^{\frac{1}{2r}} \rfloor \ \hbox{and} \ N=\lfloor n^{2+r} \rfloor, & \hbox{if} \ 0<\tau < r-(d-1).
\end{array}\right.$$
Then we know by \thmref{thm:general}
that with $\mathcal{D}_1=(2N+3)\lfloor (d+JS)/d\rfloor$ and $\mathcal D_2=\lfloor (d+JS)/d\rfloor$,
there exists a network constructed in \lemref{lem:final} containing a function $\hat f\in \mathfrak{H}_{J,\mathcal{D}_1,\mathcal D_2,S}$
such that
$$
\left\|f-  \hat f\right\|_{\infty}
\leq \left(2^{r+2}+1\right) C'_{r, d, \tau} m^{-\min\left\{\frac{r}{2(d-1+\tau)}, \frac{1}{2}\right\}} \|f\|_{W^r_\infty(\sph)}.
$$
But $m\geq \frac{(S-1)J +1}{2d} > \frac{(S-1)J}{2d}$. So we have
$$
\left\|f-  \hat f\right\|_{\infty}
\leq C_{r, d, \tau, S} J^{-\min\left\{\frac{r}{2(d-1+\tau)}, \frac{1}{2}\right\}} \|f\|_{W^r_\infty(\sph)}.
$$
with the constant
$$C_{r, d, \tau, S} :=\left(2^{r+2}+1\right) C'_{r, d, \tau} \left(2d/(S-1)\right)^{\min\left\{\frac{r}{2(d-1+\tau)}, \frac{1}{2}\right\}} . $$
This yields the desired error bound.

Observe that $m= \lfloor \frac{(S-1)J +1}{d}\rfloor \leq \frac{S}{d} J \leq J$ and
$$ N\leq \left\{\begin{array}{ll}
m^{\frac{d+1}{2(d-1+\tau)}} \leq J^{\frac{d+1}{2(d-1+\tau)}}, & \hbox{if} \ 0< r <d-1, \\
n^{2+r} \leq m^{\frac{2+r}{2r}}  \leq J^{\frac{1}{2} + \frac{1}{r}}, & \hbox{if} \ 0<\tau < r-(d-1).
\end{array}\right.$$
But $d\geq 3$ implies $\frac{d+1}{2(d-1+\tau)}<1$. In the case $0<\tau < r-(d-1)$ which implies $r> d-1 +\tau >2$, we also have
$\frac{1}{2} + \frac{1}{r}< 1$. So the total number of free parameters $\mathcal{N}$ in the network can be bounded as
$$\mathcal{N} \leq
\left(3 S+5\right) J +4.$$
The proof of \thmref{theorem:mainresult1} is complete.
\end{proof}

\begin{remark}
When $r=d-1$, from the above proof, we can see by taking $N=\lfloor n^{d+1} \log (n+1)\rfloor$ that
the statement of \thmref{theorem:mainresult1} still holds except that
the bound for the number of free parameters should be replaced by $\mathcal{N} \leq
\left(3 S+3\right) J + 2 J^{\frac{d+1}{2(d-1+\tau)}} \log (J+1) +4.$ Note that $\mathcal{N} =\OO (J)$.
So to achieve the approximation accuracy $\epsilon>0$, the depth and the number of free parameters of the network
are of orders $\OO \left(\epsilon^{-2-\frac{2}{d-1}\tau}\right)$.
\end{remark}

\begin{proof}[Proof of \thmref{thm2}]
We follow the proof of \lemref{lem:final} and construct deep CNNs of depth $J= \left\lceil \frac{md-1}{S-1}\right\rceil$
with the $m$ features $\{y_j\in\sph\}_{j=1}^m$ in the additive ridge form (\ref{additiveridge}) of the approximated function $f$,
followed by downsampling and one fully-connected layer which produces $h^{(J+1)}(x) \in \RR^{\hat{d}(2N+3)}$ expressed by (\ref{firstfull}).
Then by making use of the univariate functions $\{g_j\}_{j=1}^m$ in the additive ridge form (\ref{additiveridge}) of the approximated function $f$,
we choose the coefficient vector $c^{(J+1)}\in \RR^{\hat{d}(2N+3)}$ by means of the linear operator ${\mathcal L}_N$ as
$$ \left\{\left(c^{(J+1)}\right)_{(j-1)(2N+3) +i}\right\}_{i=1}^{2N+3} = N{\mathcal L}_N \left(\left\{g_j (t_i)\right\}_{i=2}^{2N+2}\right), \qquad j=1, \ldots, m $$
and $\left(c^{(J+1)}\right)_{(j-1)(2N+3) +i}=0$ for $j>m$. Then by the identity (\ref{splineiden}), we have
\begin{eqnarray*}
c^{(J+1)} \cdot h^{(J+1)}(x) &=& N\sum_{j=1}^m \sum_{i=1}^{2N+3} \left(c^{(J+1)}\right)_{(j-1)(2N+3) +i} \sigma \left( \langle y_j, x \rangle - t_i\right) \\
&=& \sum_{j=1}^m L_{\textbf t}\left(g_j\right)\left(\langle y_j, x \rangle\right).
\end{eqnarray*}
Combining this with the additive ridge form (\ref{additiveridge}) of $f$ and Lemma \ref{spline}, we know that for $x\in\sph$,
\begin{align*}
&\left|f(x) - c^{(J+1)}\cdot h^{(J+1)}(x) \right|
=\left|\sum_{j=1}^m g_j (\la y_j, x\ra)-\sum_{j=1}^m L_{\textbf t}\left(g_j\right)\left(\langle y_j, x \rangle\right)\right| \\
\leq & \sum_{j=1}^m \|g_j -L_{\mathbf t}(g_j)\|_{C[-1,1]} \leq \sum_{j=1}^m |g_j|_{W^\alpha_\infty} N^{-\alpha}.
\end{align*}
Then the desired error bound is verified.

The total number of free parameters $\mathcal{N}$ in the network is the sum of $J(S+1)$ contributed by $\textbf{w}$,
$J(2S+1)$ by the bias vectors in the first $J$ layers, $1$ by the parameter $B^{(J)}$ in the fully-connected layer,
and $2N+1$ by the vector $\left\{g_j (t_i)\right\}_{i=2}^{2N+2}$ in choosing the coefficient vector $c^{(J+1)}$. So it can be bounded as
$$\mathcal{N} \leq J(S+1)+J(2S+1)+1+ m(2N+1) \leq J (3S+2)+ m(2N+2). $$
This proves \thmref{thm2}.
\end{proof}

\section{Conclusion and Discussion}

In this paper spherical harmonic analysis is conducted rigorously for the approximation theory of deep CNNs
followed by downsampling and one fully connected layer or two on spheres. Our analysis provides rates of uniformly approximating functions
$f \in W^r_\infty (\sph)$ with $r>0$ by deep CNNs followed by two fully connected layers.
To approximate a Lipschitz function in a special additive ridge form, a network with one fully connected layer can be as fast as
one for approximating a univariate Lipschitz function, which demonstrates the super power of deep CNNs in approximating or representing
functions with special structures. Our spherical analysis relies on a special property of the reproducing kernel of $\CH_n^d$ on the sphere.
It would be interesting to extend our technique to approximation of non-smooth functions on $[-1, 1]^d$ and to $L_p$ approximation with $1\leq p <\infty$.

\section*{Acknowledgments}

The authors would like to thank the referees for their
encouraging comments and constructive suggestions that led to an improved presentation of this
paper. The second author is supported partially by the Research Grants Council of Hong Kong [Project \# CityU 21207019] and by City University of Hong Kong [Project \# CityU 7200608].
The last author is supported partially by the Research Grants Council of Hong Kong [Project \# CityU 11306318].


\bibliographystyle{abbrvnat}
\small

\end{document}